
\documentclass[10pt,twocolumn,letterpaper]{article}

\usepackage[pagenumbers]{cvpr} 

\usepackage{graphicx}
\usepackage{amsmath}
\usepackage{amssymb}
\usepackage{booktabs}

\usepackage[square,sort,comma,numbers]{natbib}
\bibliographystyle{plainnat}

\usepackage{enumitem}
\usepackage[dvipsnames]{xcolor}
\usepackage{lipsum}
\usepackage{multirow}

\usepackage[export]{adjustbox}

\usepackage{afterpage}
\usepackage{bm}
\usepackage{forloop}

\newcommand{\inctabcolsep}[2]{\addtolength{\tabcolsep}{#1} #2 \addtolength{\tabcolsep}{-#1}}

\renewcommand{\paragraph}[1]{\vspace{.35em}\noindent\textbf{#1}}

\usepackage{amsthm}
\newtheorem{theorem}{Theorem}

\newcommand\blfootnote[1]{%
  \begingroup
  \renewcommand\thefootnote{}\footnote{#1}%
  \addtocounter{footnote}{-1}%
  \endgroup
}

\newcommand\arxivonly[1]{
\ifx\usearxiv\undefined
\else
#1
\fi
}
\def\usearxiv{1}

%
\usepackage[pagebackref,breaklinks,colorlinks]{hyperref}

\usepackage[capitalize]{cleveref}
\crefname{section}{Sec.}{Secs.}
\Crefname{section}{Section}{Sections}
\Crefname{table}{Table}{Tables}
\crefname{table}{Tab.}{Tabs.}


\begin{document}

\title{\arxivonly{\vspace{-20px}}Disentangled Unsupervised Image Translation \\ via Restricted Information Flow}
\author{Ben Usman*\\
Boston University\\
{\tt\small usmn@bu.edu}
\and
Dina Bashkirova*\\
Boston University\\
{\tt\small dbash@bu.edu}
\and
Kate Saenko\\
Boston University\\
MIT-IBM Watson AI Lab \\
{\tt\small saenko@bu.edu}
}
\maketitle

\begin{abstract}
  Unsupervised image-to-image translation methods aim to map images from one domain into plausible examples from another domain while preserving structures shared across two domains. 
  In the many-to-many setting, an additional guidance example from the target domain is used to determine domain-specific attributes of the generated image.
   In the absence of attribute annotations, methods have to infer which factors are specific to each domain from data during training. Many state-of-art methods hard-code the desired shared-vs-specific split into their architecture, severely restricting the scope of the problem.
   In this paper, we propose a new method that does not rely on such inductive architectural biases, and infers which attributes are domain-specific from data by constraining information flow through the network using translation honesty losses and a penalty on the capacity of domain-specific embedding.
   We show that the proposed method achieves consistently high manipulation accuracy across two synthetic and one natural dataset spanning a wide variety of domain-specific and shared attributes.
\end{abstract}
\begin{figure}[t]
\begin{center}
\vspace*{-20px}\includegraphics[width=\linewidth,trim=0 0.4in 5.8in 0,clip]{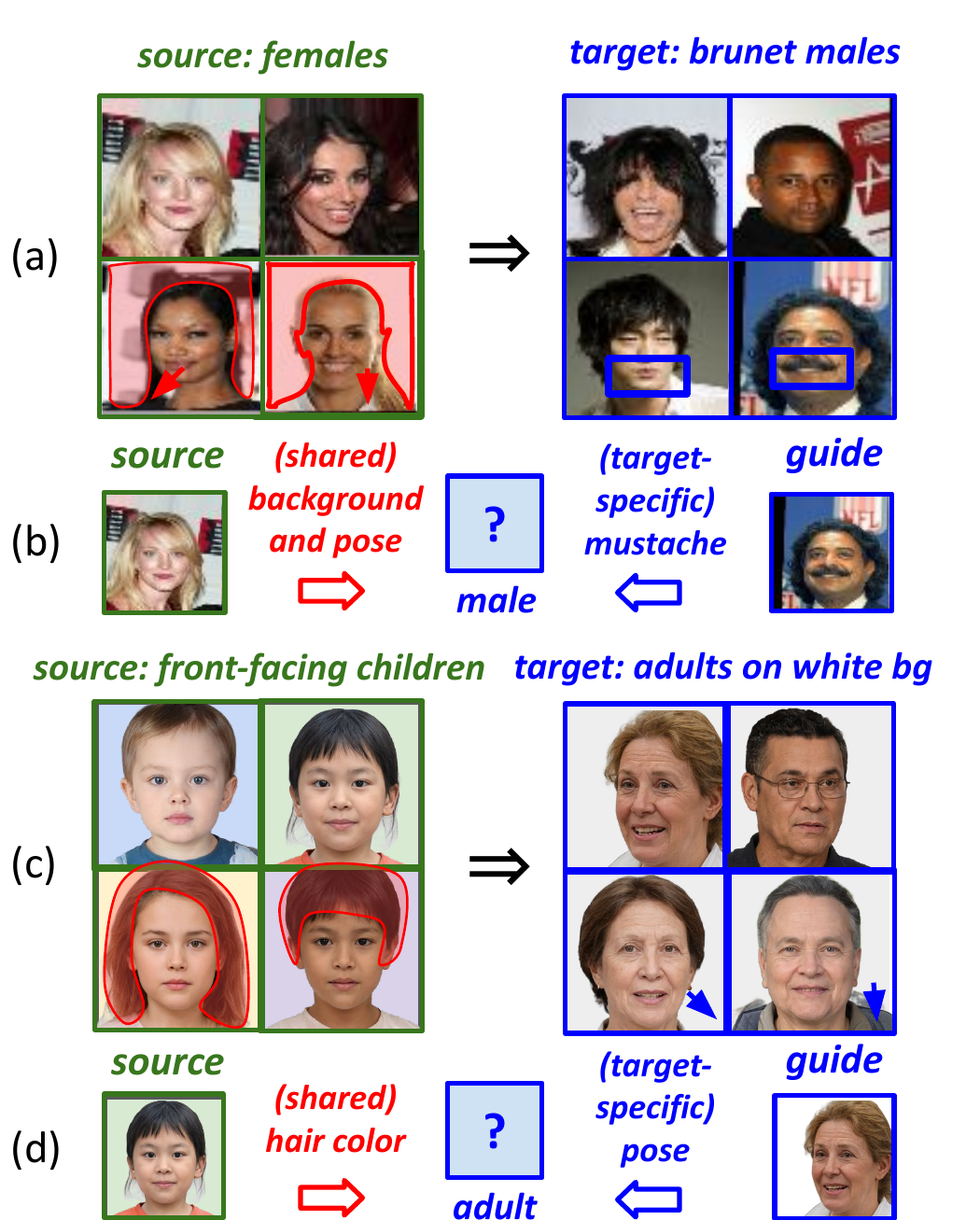}\vspace{-10px}
\end{center}
\caption{
\textbf{Problem.}
An unsupervised many-to-many image translation model must \textit{infer} which attributes are {\color{blue}target-specific}, \eg mustache in \textbf{(a)}, and pose in \textbf{(c)}, \textit{extract} values of these attributes from the ``guide'', and \textit{apply} them during cross-domain translation. Most state-of-art methods assume that all variations in colors and textures are target-specific, and fail to generalize when attributes like background texture, hair color or skin color are varied in both.
}\vspace{-15px}

\label{fig:fig1_problem}
\end{figure}

\section{Introduction}\label{sec:intro}

The goal \arxivonly{\blfootnote{* equal contribution}} of unsupervised image-to-image translation is to learn a mapping between two sets of images (or two domains) without any pair supervision. For example, face domains in Figure~\ref{fig:fig1_problem}a are defined by gender and share variability in poses and backgrounds, so a correct cross-domain mapping must change the face gender, but preserve the pose and the background of the original. When one domain has some unique attributes absent in the other domain, like males with and without beards in CelebA~\cite{CelebAMask-HQ}, the question of whether the ``correct'' one-to-one cross-domain mapping should add a beard to a specific female face does not have a well-defined answer. However, if we alter the problem definition by providing a ``guide'' image specifying male-specific factors, the resulting unsupervised \textit{many-to-many} translation problem has a well-defined correct solution: the learned mapping must preserve all attributes of the source image that are shared across two domains and use values of attributes unique to the target domain from the guide input (\eg preserve pose and background of the female source and add a beard from the male guide in the example above). 

\begin{figure}[t]
\begin{center}
\vspace{-15px}
\includegraphics[width=\linewidth,trim=0 2in 4in 0,clip]{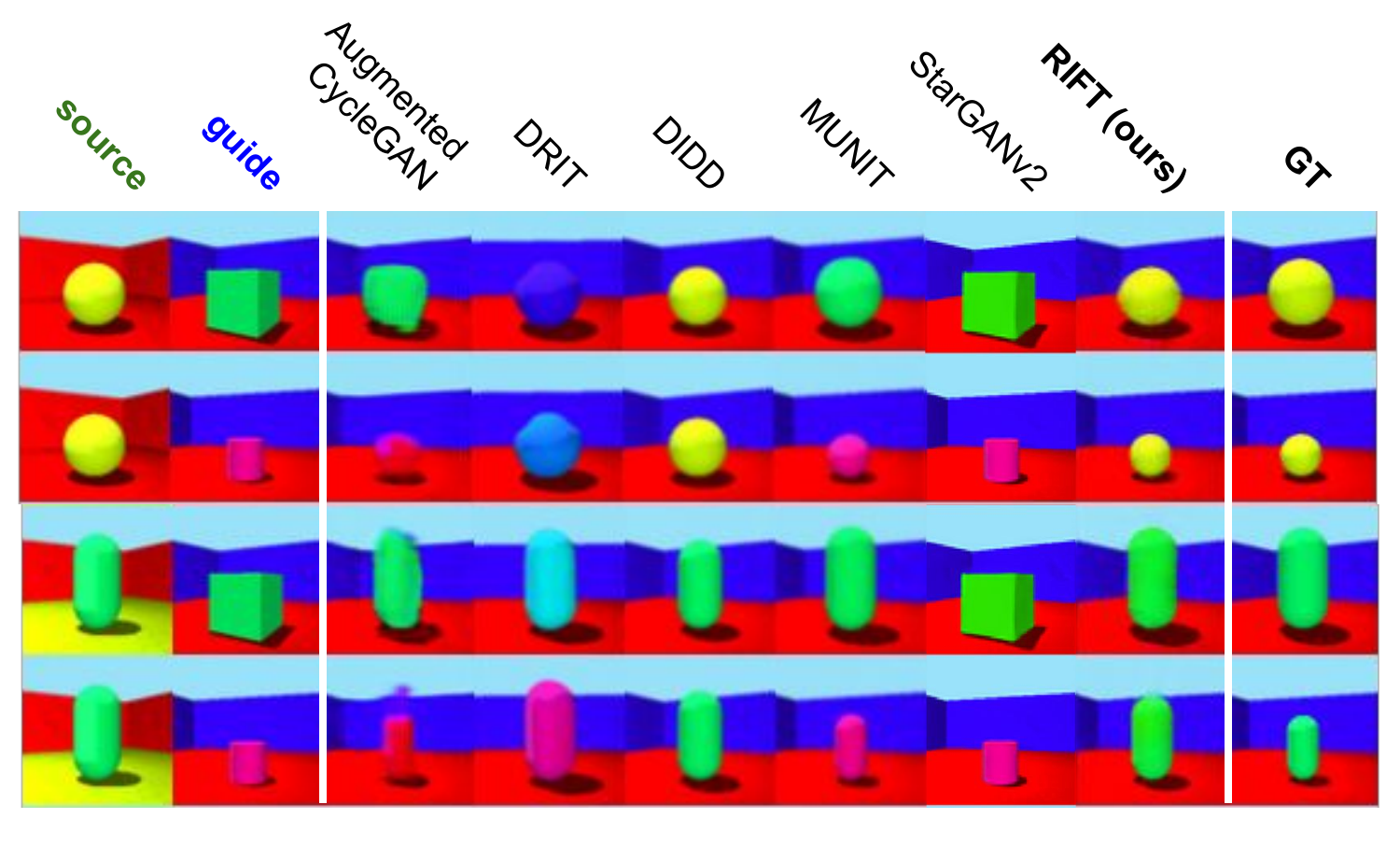}\vspace{-10px}
\end{center}
\caption{
\textbf{Flaws in existing methods.} 
On Shapes-3D-A \cite{kim2018disentangling,bashkirova2021evaluation}, all prior methods fail to either preserve shared attributes of the source (shape, object color), or apply target-specific attributes of the guide (size, orientation), or both. }\vspace{-10px}
\label{fig:shapes_compare}
\end{figure}

Most state-of-the-art methods for unsupervised many-to-many translation implicitly assume that the domain-specific variations can be modeled as ``global style'' (textures and colors) by hard-coding this assumption into their architectures via adaptive instance normalization (AdaIN) \cite{huang2017arbitrary} originally proposed for style transfer.
However, this choice severely restricts the kinds of problems that can be efficiently solved. 
More specifically, AdaIN-based methods~\cite{huang2018multimodal,choi2020stargan} inject domain-specific information from the guide image via a global feature re-normalization that forces colors, textures, and other global statistics to be always treated as domain-specific factors regardless of their actual distribution across the two domains. 
As a result, AdaIN-based methods change colors and textures of the input to match the guide image during translation even if colors/textures are varied across both domains and should not change. 
For example, background textures in the female-to-male setting (Figure~\ref{fig:fig1_problem}a)  vary in both domains, and therefore should be preserved, the same holds for hair color in the children-to-adults setting (Figure~\ref{fig:fig1_problem}c).
Even on a toy perfectly-balanced problem (Figure \ref{fig:shapes_compare}) AdaIN-based methods (\eg MUNIT \cite{huang2018multimodal}) change object color of the input to match the object color of the guide, even though object color is varied in both domains, and thus should be preserved.

Autoencoder-based methods \cite{almahairi2018augmented,DRIT_plus,benaim2019didd}, on the other hand, preserve shared information better, but often fail to apply correct domain-specific factors. For example, DIDD \cite{benaim2019didd} preserved the object color of the source in Fig.~\ref{fig:shapes_compare}, but failed to extract and apply the correct orientation and size from the guide. Overall, both our experiments and recent advances in evaluation of many-to-many image translation \cite{bashkirova2021evaluation}, show that all existing methods generally either fail to preserve global domain-specific attributes or fail to apply domain-specific factors well.

In this paper, we propose Restricted Information Flow for Translation \textbf{(RIFT)} - a novel approach that does not rely on an inductive bias provided by AdaIN and achieves high attribute manipulation accuracy across different kinds of attributes regardless of whether they are shared or domain-specific. As illustrated in Figure~\ref{fig:method_overview}, during ``brunet male-to-female translation'' our method preserves \textit{shared factors} (background and pose) of the input male face, and encodes \textit{male-specific attributes} (mustache) in a domain-specific embedding to enable accurate reconstruction of the source image. The core observation at the heart of our method is that \textit{only} values of shared attributes (background and pose) of the source can be encoded naturally in a generated image from the target domain, whereas source-specific attributes (mustache) can be encoded in the generated image only by ``hiding'' them in the form of structured adversarial noise \cite{bashkirova2019adversarial}. With this in mind, we propose using the \textit{translation honesty} loss \cite{bashkirova2019adversarial} to penalize the model for ``hiding''~\cite{chu2017cyclegan} a mustache inside the generated female image, and the \textit{embedding capacity} loss to penalize the model for encoding shared factors into the domain-specific embedding. As a result, information about the mustache is forced out of the generated female image into the domain-specific embedding, while information about the pose and background is forced out of domain-specific embeddings into the translation result - resulting in proper disentanglement of domain-specific and domain-invariant factors.

We measure how well RIFT models different kinds of attributes as either shared or domain-specific across three splits of Shapes-3D~\cite{kim2018disentangling}, SynAction \cite{sun2020twostreamvan} and Celeb-A \cite{CelebAMask-HQ} following an evaluation protocol similar to the one proposed by \citet{bashkirova2021evaluation}. Our experiments confirm that the proposed method achieves high attribute manipulation accuracy without relying on an inductive bias towards treating certain attribute kinds as domain-specific hard-coded into its architecture. 

\begin{figure}[t]
\begin{center}
\vspace{-15px}
\includegraphics[width=\linewidth,trim=0 2.7in 5.5in 0,clip]{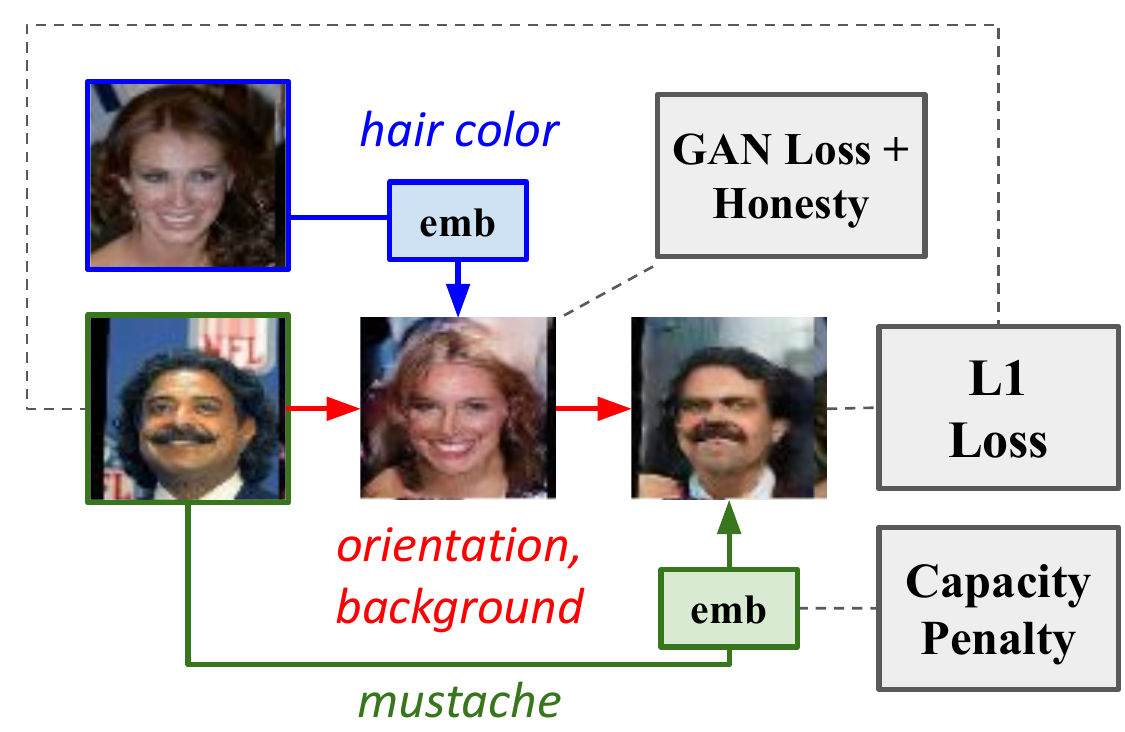}\vspace{-10px}
\end{center}
\caption{\textbf{Overview of the proposed method -- RIFT.} 
Source male image and male-specific factors ({\color{OliveGreen}green}), female guide input and female-specific attributes ({\color{blue}blue}), shared attributes ({\color{red}red}).}\vspace{-10px}
\label{fig:method_overview}
\end{figure}

\section{Related work}


\paragraph{Image-to-image translation.} In contrast to task-specific image translation methods \cite{cao2017unsupervised, guadarrama2017pixcolor, lugmayr2019unsupervised, qu2018unsupervised, gatys2015neural, ulyanov2016texture}, early unsupervised image-to-image translation methods, such as CycleGAN~\cite{zhu2017unpaired},
and UNIT~\cite{liu2017unsupervised},
infer semantically meaningful cross-domain mappings from arbitrary pairs of semantically related domains without pair supervision. 
These methods assume one-to-one correspondence between examples in source and target domains, which makes the problem ill-posed if at least one of two domains has some unique domain-specific factors, as we discussed in Section~\ref{sec:intro}. 

\paragraph{Many-to-many translation.} To account for such domain-specific factors, and to enable control over them in the translation results, many-to-many image translation methods \cite{huang2018multimodal,almahairi2018augmented,choi2020stargan,liu2019few,DRIT_plus} have been proposed. These methods separate domain-invariant ``content'' from domain-specific ``style'' using separate encoders. Following \cite{bashkirova2021evaluation}, we avoid terms ``content'' and ``style'' to distinguish general many-to-many translation from its subtask - style transfer \cite{gatys2015neural}.

\paragraph{Adaptive instance normalization.} Many state-of-art many-to-many translation methods, such as  MUNIT~\cite{huang2018multimodal}, FUNIT~\cite{liu2019few} and StarGANv2~\cite{choi2020stargan}, use AdaIN \cite{huang2017arbitrary}, originally proposed for style transfer \cite{gatys2015neural}.
More specifically, these methods modulate activations of the decoder with the domain-specific embedding of the guide. This architectural choice was shown to limit the range of applications of these methods to cases when 
domain-specific information lies within textures and colors~\cite{bashkirova2021evaluation}. 

\paragraph{Autoencoders.}
In contrast, methods like Augmented CycleGAN~\cite{almahairi2018augmented}, DRIT++~\cite{DRIT_plus} and Domain Intersection and Domain Difference (DIDD)~\cite{benaim2019didd} rely on embedding losses and therefore are more general.
For example, DIDD forces domain-specific embeddings of opposite domain to be zero, 
while DRIT++ uses adversarial training to make the source and target content embeddings indistinguishable. 

\paragraph{Cycle losses.} 
Most methods \cite{huang2018multimodal,almahairi2018augmented} also use cycle-consistency losses on domain-specific embeddings to ensure that information extracted from the guidance image is not ignored during translation, and cycle loss on images to improve semantic consistency \cite{chu2017cyclegan}.
However, cycle-consistency losses on images have been shown~\cite{chu2017cyclegan,bashkirova2019adversarial} to force one-to-one unsupervised translation models to ``cheat'' by hiding domain-specific attributes into translations.


Overall, prior methods ensure that the guide input modulates the translation result in some non-trivial way, but, to our knowledge, no prior work explicitly address adversarial embedding of domain-specific information into the translated result, or ensures that domain-invariant factors are preserved during translation, and this work fills this gap.

\section{Restricted Information Flow for Translation}\label{sec:method}

In this section we first formally introduce the many-to-many image translation problem, and describe how our method solves it. Our model reconstructs input images from translation results and domain-specific embeddings as illustrated in Fig.~\ref{fig:method_overview}, forcing domain-invariant information out of domain-specific embedding using capacity losses, and forcing domain-specific information out from the generated translation using honesty losses. 

\paragraph{Setup.} 
Following \cite{huang2018multimodal}, we assume that we have access to two unpaired image datasets $A = \{a_i\}$ and $B = \{b_i\}$ that share some semantic structure, differ visually (\eg male and female faces with poses, backgrounds and skin color varied in both). In addition to that, each domain has some attributes that vary only within that domain, \eg only males have variation in the amount of facial hair and only females have variation in the hair color (like in Figure~\ref{fig:fig1_problem}). Our goal is to find a pair of guided cross-domain mappings $F_{\text{A2B}}: A, B \to B$ and $F_{\text{B2A}}: B, A \to A$ such for any source inputs $a_s, b_s$ and guide inputs $a_g, b_g$ from respective domains, resulting guided cross-domain translations $b' = F_{\text{A2B}}(a_s, b_g)$ and $a' = F_{\text{B2A}}(b_s, a_g)$ look like plausible examples of respective output domains, share domain-invariant factors with their ``source'' arguments ($a_s$ and $b_s$ respectively) and domain-specific attributes with their ``guidance'' arguments ($b_g$ and $a_g$ respectively). This general setup covers the absolute majority of real-world image-to-image tasks. For example, the correct guided female-to-male mapping $F_{\text{B2A}}$ applied to female source image $b_s$ and a guide male image $a_g$ should generate a new male image $a'$ with pose, background, skin color, and other shared factors from the female input image $b_s$, and facial hair from the guidance input $a_g$, because poses, backgrounds and skin color vary in both, while facial hair is male-specific.
\begin{figure*}[t]
\begin{center}
\vspace{-15px}
\includegraphics[width=\linewidth,trim=0 2.8in 0.5in 0,clip]{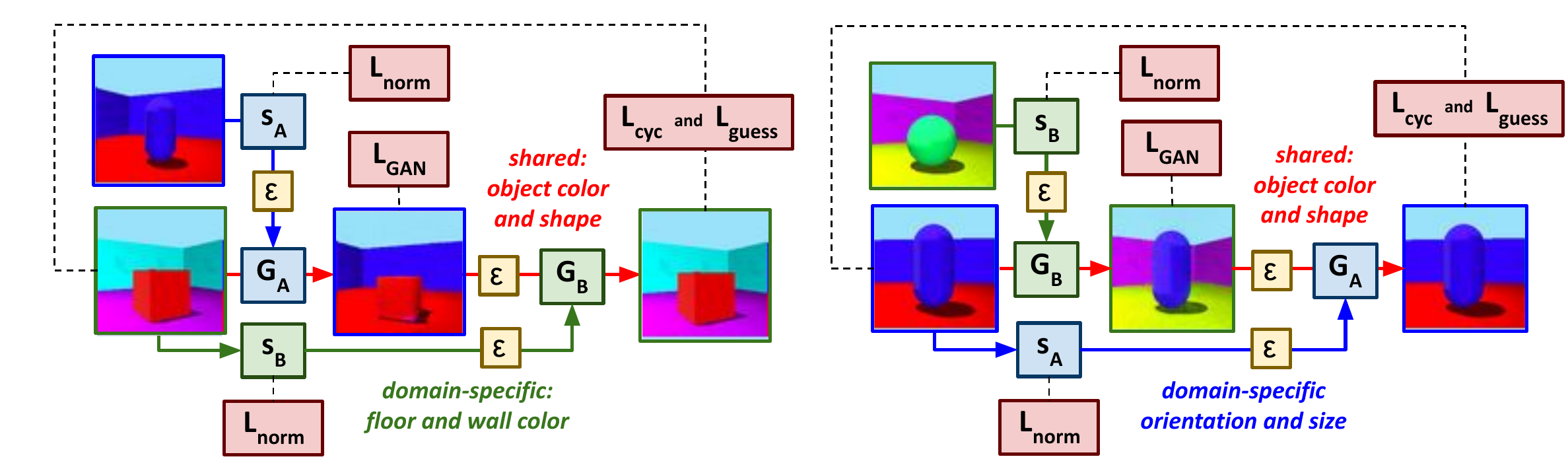}\vspace{-10px}
\end{center}
\caption{
\textbf{Losses used to train RIFT.} For illustration purposes, we use 3D-Shapes-A split described in Section~\ref{sec:experiment} and illustrated in Figure~\ref{fig:shapes_examples}. When the model is trained, {\color{OliveGreen}green} arrows carry only B-specific information (floor and wall color), {\color{blue}blue} arrows carry only A-specific information (orientation and size), and {\color{red}red} arrows carry information shared across two domains (object color and shape). }\vspace{-10px}
\label{fig:method}
\end{figure*}

\paragraph{Method.} While it might be possible to approximate functions $F_{\text{A2B}}$ and $F_{\text{B2A}}$ directly, following prior work, we split each one into two learnable parts:
encoders $s_A(a), s_B(b)$ that extract domain-specific information from corresponding guide images, and generators $G_{\text{A2B}}(a, s_b)$ and $G_{\text{B2A}}(b, s_a)$ that combine that domain-specific information with a corresponding source image, as illusrated in Figure~\ref{fig:method}. Final many-to-many mappings are just compositions of encoders and generators: 
\begin{gather*}
    F_{\text{A2B}}(a, b) = G_{\text{A2B}}(a, s_B(b)), \ 
    F_{\text{B2A}}(b, a) = G_{\text{B2A}}(b, s_A(a))
\end{gather*}

Our goal is to ensure that encoders $s_*$ extract all domain-specific information
from their inputs (and nothing more), and that generators $G_*$ use that information, along with domain-invariant factors from their source inputs to form plausible images from corresponding domains.

\paragraph{Noisy cycle consistency loss.} First, to ensure that each attribute of input images is not ignored completely, (\ie that it is treated as either domain-specific, or domain-invariant, or both), we use a guided analog of the cycle consistency loss. This loss ensures that any image translated into a different domain, and translated back with its original domain-specific embedding is reconstructed perfectly. Additionally, as the first step towards restricting the amount of information passed through each branch, we add zero-mean Gaussian noise ($\varepsilon$) of amplitude $\sigma_s$ or $\sigma_g$ and appropriate shape to translations and domain-specific embeddings respectively, before reconstructing images back:
\begin{gather*}
    L_{\text{cyc}}^A = \mathbb E_{a, b} \ ||a_\text{cyc} - a||_1, \ \ 
    L_{\text{cyc}}^B = \mathbb E_{b, a} \ ||b_\text{cyc} - b||_1 \\
    a_\text{cyc} = G_{\text{B2A}}(G_{\text{A2B}}(a, s_B(b) + \varepsilon_{g}) + \varepsilon_{s}, s_A(a) + \varepsilon_{g}) \\
    b_\text{cyc} = G_{\text{A2B}}(G_{\text{B2A}}(b, s_A(a) + \varepsilon_{g}) + \varepsilon_{s}, s_B(b) + \varepsilon_{g} )\\
    a \sim A, \ b \sim B, \ \varepsilon_{s} \sim \mathcal N(0, \sigma_{s}), \ \varepsilon_{g} \sim \mathcal N(0, \sigma_{g})
\end{gather*}
\paragraph{Translation honesty.} Unfortunately, any form of cycle loss encourages the model to ``hide'' domain-specific information inside the translated image in the form of structured adversarial noise~\cite{chu2017cyclegan}. 
To actively penalize the model for ``hiding'' the domain-specific information, such as mustache, inside a generated female image (instead of putting it into a male-specific embedding $s_a$), we use the \textit{guess loss} \cite{bashkirova2019adversarial}. 
This loss detects and prevents this so-called ``self-adversarial attack'' in the generator by training an additional discriminator to ``guess'' which of its two inputs is a cycle-reconstruction and which is the original image. 
For example, if the male-to-female generator $G_{\text{A2B}}$ is consistently adversarially embedding mustaches into all generated female images, then the cycle-reconstructed female $b_\text{cyc}$ will also have traces of an embedded mustache, and will be otherwise be identical to the input $b$. 
In this case, the guess discriminator, trained specifically to detect differences between input images and their cycle-reconstructions, will detect this hidden signal and penalize the model:  
\begin{gather*}
    L_{\text{guess}}^A = 
            [D^{\text{gs}}_A(a, a_\text{cyc})]^2 + [1 - D^{\text{gs}}_A(a_\text{cyc}, a)]^2 \\
    L_{\text{guess}}^B = 
            [D^{\text{gs}}_B(b, b_\text{cyc})]^2 + [1 - D^{\text{gs}}_B(b_\text{cyc}, b)]^2
\end{gather*}
\paragraph{Domain-specific channel capacity.} Unfortunately, neither of two losses described above can prevent the model from learning to embed the entire guide image $a_g$ into the domain-specific embeddings $s_a$ and reconstructing it from that embedding in $G_\text{B2A}$,  ignoring its first argument completely, \ie just always producing the guide input exactly. In order to prevent this from happening we penalize norms of domain-specific embeddings, effectively constraining the capacity of the resulting channel:
\begin{gather*}
L_{\text{norm}}^A = \mathbb E_a \ ||s_A(a)||_2^2, \ \
L_{\text{norm}}^B = \mathbb E_b \ ||s_B(b)||_2^2
\end{gather*}

Intuitively, the mutual information between the input guide image $a_g$ and the predicted translation $a'$ corresponds to the maximal amount of information that an observer could learn about translations $a'$ by observing guides $a_g$ if they had infinite amount of examples to learn from. Formally, using the derivation for the capacity of the additive white Gaussian noise channel (Sec. \ref{subsec:capacity}) we can show that:
\begin{gather*}
    \operatorname{MI}(a_g; a') \lesssim \operatorname{dim}(s_A(a)) \cdot \log_2 \left(1 + L^A_{\text{norm}} / \sigma_g^2\right), \\
    \text{ where } a' = G_{\text{B2A}}(b_s, s_A(a_g) + \varepsilon_{g}), \ \varepsilon_{g} \sim \mathcal N(0, \sigma_{g})
\end{gather*}
meaning that minimizing $L^A_{\text{norm}}$ loss effectively limits the amount of information from the guide image $a_g$ that $G_\text{A2B}$ can access to generate $a'$, \ie the effective capacity of the domain-specific embedding. Note that disabling either the noise ($\sigma_g=0$) or the capacity loss ($L_\text{norm} \to \infty$) results in effectively \textit{infinite} capacity, so we need both. Intuitively, this bound describes the expected number of ``reliably distinguishable'' embeddings that we can pack into a ball of radius $\sqrt{L^A_{\text{norm}}}$ given that each embedding will be perturbed by Gaussian noise with amplitude $\sigma_g$. 


\paragraph{Realism losses.} Remaining losses are analogous to CycleGAN \cite{liu2017unsupervised} losses that ensure that output images lie within respective domains:
\begin{gather*}
    L_{\text{GAN}}^A = 
            [D_A(a)]^2 + \left[1 - D_A(G_{\text{B2A}}(b, s_A(a) + \varepsilon_{s}^a))\right]^2 \\
    L_{\text{GAN}}^B = 
            [D_B(b)]^2 + \left[1 - D_B(G_{\text{A2B}}(a, s_B(b) + \varepsilon_{s}^b))\right]^2 \\
    L_{\text{idt}}^A = \mathbb E_{a} \ ||G_{\text{B2A}}(a, s_A(a) + \varepsilon_{g}) - a||_1, \\
    L_{\text{idt}}^B = \mathbb E_{b} \ ||G_{\text{A2B}}(b, s_B(b) + \varepsilon_{g}) - b||_1
\end{gather*} 

\paragraph{Discriminator losses} We also train discriminator networks $D_A, D_B$ and guess discriminators $D_A^{\text{gs}}, D_B^{\text{gs}}$ by minimizing corresponding adversarial LS-GAN \cite{mao2017least} losses.

\section{Experiments}\label{sec:experiment}

We would like to measure how well each model can generalize across a diverse set of shared and domain-specific attributes. In this section we discuss datasets we used and generated to achieve this goal, as well as list baselines and metrics we used to compare our method to prior work.

\begin{figure}[t]
\begin{center}
\vspace{-10px}
\includegraphics[width=\linewidth,trim=0.1in 3.6in 5.9in 0,clip]{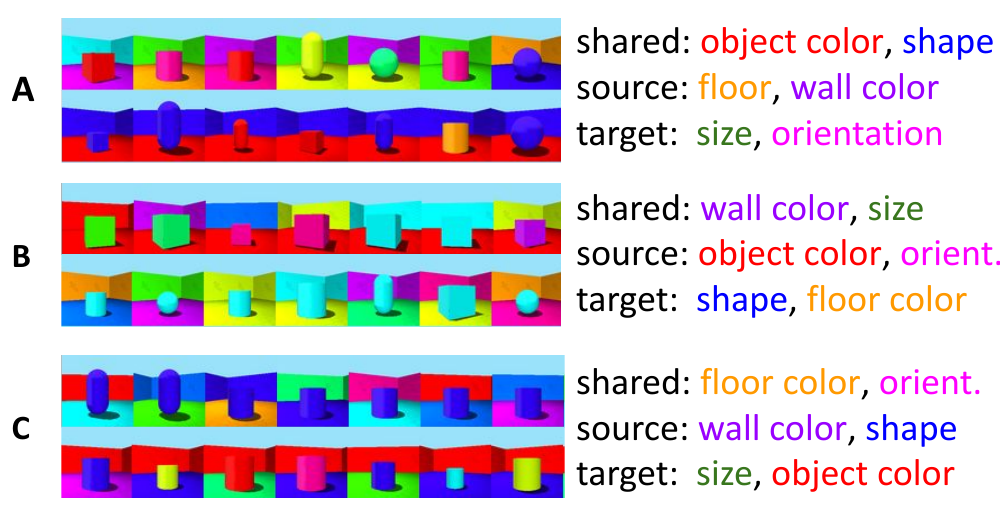}
\end{center}\vspace{-10px}
\caption{
\textbf{Shapes-3D-ABC} splits with respective shared and domain-specific attributes.}\vspace{-10px}
\label{fig:shapes_examples}
\end{figure}

\paragraph{Data.} 
Following the protocol proposed by \citet{bashkirova2021evaluation}, we re-purposed existing disentanglement datasets to evaluate the ability of our method to model different attributes as shared and domain-specific. We used 3D-Shapes \cite{kim2018disentangling}, SynAction \cite{sun2020twostreamvan} and CelebA \cite{CelebAMask-HQ}. Unfortunately, among the three, only 3D-Shapes \cite{kim2018disentangling} is balanced enough and contains enough labeled attributes to make it possible to generate and evaluate all methods across several attribute splits of comparable sizes. For example, if we attempted to build a split of SynAction with domain-specific pose attribute, the domain with fixed pose would only contain 90 unique images, which is not sufficient to train an unsupervised translation network.  

\paragraph{3D-Shapes-{ABC}.} The original 3D-Shapes \cite{kim2018disentangling} dataset contains 40k synthetic images labeled with six attributes: floor, wall and object colors, object shape and object size, and orientation (viewpoint). There are ten possible values for each color attribute, four possible values for the shape (cyliner, capsule, box, sphere), fifteen values for orientation, and eight values for size. We used three subsets of 3D-Shapes with different attribute splits visualized in Figure~\ref{fig:shapes_examples}. Three resulting domain pairs contained 4.8k/4k, 12k/3.2k, and 12k/6k images respectively.

\paragraph{SynAction.} We used the same \cite{bashkirova2021evaluation} split of SynAction~\cite{sun2020twostreamvan} - with background varied in one domain (nine possible values), identity/clothing varied in the other (ten possible values), and pose varied in both (real-valued vector). The resulting dataset contains 5k images in one domain and 4.6k images in the other. 
We note that the attribute split of this dataset \textbf{matches} the inductive bias of AdaIN methods, since the layout (pose) is shared and textures (background, clothing) are domain-specific in both domains.

\paragraph{CelebA.} We used the male-vs-female split proposed by~\cite{bashkirova2021evaluation} with 25k/25k images, and evaluated disentanglement of six most visually prominent attributes: pose, skin and background color (shared attributes, real-valued vectors), male-specific presence of facial hair (binary), female-specific hair color (three possible values), and domain-defining gender.

\paragraph{Baselines} We compare the proposed method against several state-of-art AdaIN methods, namely MUNIT \cite{huang2018multimodal}, StarGANv2 \cite{choi2020stargan}, MUNITX \cite{bashkirova2021evaluation}, and autoencoder-based methods, namely Domain Intersection and Domain Difference (DIDD) \cite{benaim2019didd}, Augmented CycleGAN \cite{almahairi2018augmented} and DRIT++ \cite{DRIT_plus}. In what follows we also provide a random baseline (RAND) that corresponds to selecting and returning a random image from the target domain. 

\paragraph{Metrics} In order to evaluate the performance of our method, we measured how well the domain-specific attributes were manipulated and domain-invariant attributes were preserved. Following \citet{bashkirova2021evaluation} we trained an attribute classifier $f(x)$, and for each attribute $k$, we measured the its \textit{manipulation accuracy} - the probability of correctly modifying an attribute across input-guide pairs for which the value of the attribute \emph{must change}:
\begin{equation*}
    \text{ACC}_k^{\text{A}} = p(f_k(F_{\text{A2B}}(a, b)) = y^*_k \ | \ f_k(a) \neq f_k(b))
\end{equation*}
where the ``correct'' attribute value equals $y^*_k = f_k(a)$ for shared attributes, and $y^*_k = f_k(b)$ otherwise. For real-valued multi-variate attributes (pose keypoints, background RGB, skin RGB, etc.) we measured the probability of generating an image with predicted attribute vector closer to the correct attribute vector $y^*_k$ then to the incorrect vector:
%
\begin{equation*}
    \text{ACC}_k^{\text{A}} = p(\|f_k(F_{\text{A2B}}(a, b)) - y^*_k\| \leq \|f_k(F_{\text{A2B}}(a, b)) - y'_k\|)
\end{equation*}
where $y^*_k = f_k(a)$ and $y'_k = f_k(b)$ for shared attributes, and vice-versa otherwise.
The manipulation accuracy in the opposite direction $\text{ACC}_k^{\text{B}}$ was estimated analogously. For \textit{Shapes-3D} we additionally \textit{aggregated} results across three splits by averaging manipulation accuracies across splits in which the given attribute was shared/common (C) or domain-specific (S). If we introduce the set of all splits $\mathcal S$ and predicates $\operatorname{common(k, s)}$ and $\operatorname{specific(k, s, \text{dom})}$, and the manipulation accuracy at a given split $\text{ACC}_k^A(s)$, \textit{aggregated manipulation accuracy} can be defined as follows:
\begin{gather}
    \text{ACC}_k^S = \frac{ \sum_{d \in \{\text{A, B}\}} \sum_{s \in S} \text{ACC}^d_k(s) \cdot \operatorname{specific(k, s, d)} } { \sum_{d \in \{\text{A, B}\}} \sum_{s \in S} \operatorname{specific(k, s, d)} } \\
    \text{ACC}_k^C = \frac{ \sum_{d \in \{\text{A, B}\}} \sum_{s \in S} \text{ACC}^d_k(s) \cdot \operatorname{common(k, s)} } { \sum_{d \in \{\text{A, B}\}} \sum_{s \in S} \operatorname{common(k, s)} }
\end{gather}
For three splits of \textit{3D-Shapes} we also report the \textit{relative discrepancy} between domain-specific and domain-invariant manipulation accuracies:
\begin{gather}
    \text{RD} = 100 \cdot \frac{ \sum_{k} |\text{ACC}^S_k -  \text{ACC}^C_k| }{ \sum_{k} (\text{ACC}^S_k +  \text{ACC}^C_k) }.
\end{gather}

\paragraph{Evaluation.} To compute metrics above, we generated two guided translations per source image per domain per baseline. We re-ran each method multiple times to account for poor initialization. We used PoseNet \cite{papandreou2018personlab} to get ground truth poses for SynAction, and \citet{ruiz2018headpose} and median background and skin color for CelebA, see suppl. Fig.~\ref{fig:clf_examples}.

\paragraph{Architecture.} We used standard CycleGAN components: pix2pix~\cite{isola2017image} generators and patch discriminators with LS-GAN loss \cite{mao2017least}. We archived best results when represented domain-specific embedding vectors as single-channel images, and 
made generators and encoders for the same domain (\eg $G_{\text{A2B}}$ and $s_A$) share all but last layers.

\section{Results}

In this section, we first compare our method to prior work both qualitatively and quantitatively. Then we show what happens if we remove key losses discussed Section~\ref{sec:method}. 
And finally, we discuss implicit assumptions made by our method, and key challenges that future methods might encounter in further improving manipulation accuracy across three datasets we used in this paper.

\begin{figure}[t]
\begin{center}
\vspace{-10px}
\includegraphics[width=\linewidth,trim=0.15in 0.1in 2.6in 0,clip]{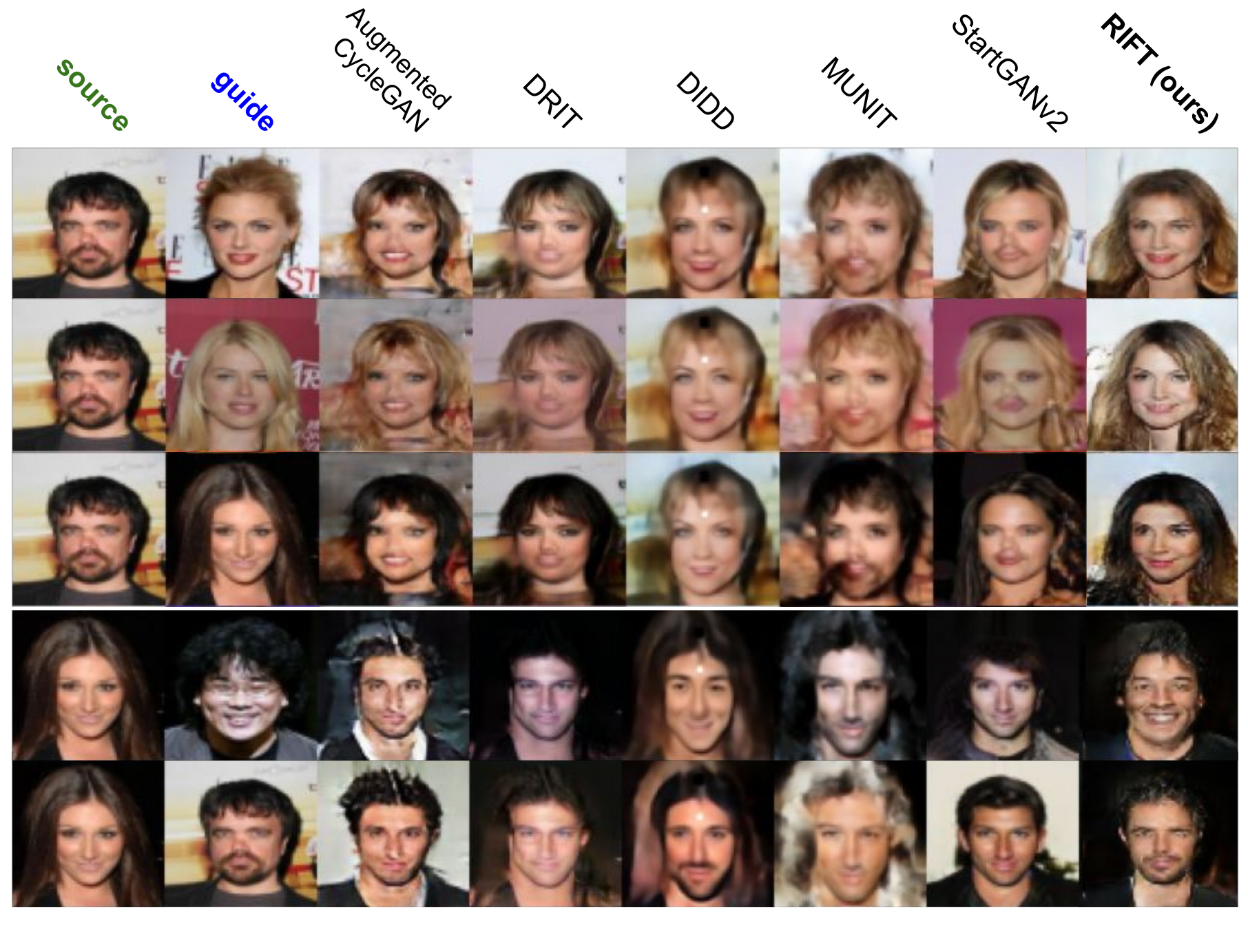}
\end{center}\vspace{-20px}
\caption{\textbf{Qualitative} results on CelebA. Methods should preserve pose and background of the source, and apply hair color of the female guide (top) and the facial hair of the male guide (bottom). }\vspace{-10px}
\label{fig:celeba_qual}
\end{figure}
\begin{table*}[ht]\centering\small
\vspace{-14px}
\inctabcolsep{-3.3pt} {
\begin{tabular}{lcccccccccccccc|cccc|ccccccc|l}\toprule
\multirow{3}{*}{\textbf{Method}} & \multicolumn{14}{c}{3D-Shapes\cite{kim2018disentangling}-ABC} & \multicolumn{4}{c}{SynAction \cite{sun2020twostreamvan}} & \multicolumn{7}{c}{CelebA \cite{CelebAMask-HQ} } &  \\
\cmidrule(lr){2-15} \cmidrule(lr){16-19} \cmidrule(lr){20-26} \cmidrule(lr){27-27} 
& \multicolumn{2}{c}{{FC}} & \multicolumn{2}{c}{{WC}} & \multicolumn{2}{c}{{OC}} & \multicolumn{2}{c}{{SZ}} & \multicolumn{2}{c}{{SH}} & \multicolumn{2}{c}{{ORI}} & \multicolumn{2}{c}{\textbf{AVG}} & \multicolumn{1}{c}{PS} & \multicolumn{1}{c}{IDT} & \multicolumn{1}{c}{BG} & \multicolumn{1}{c}{\textbf{AVG}} & HC & FH & GD & ORI & BG & SC & \multicolumn{1}{c}{\textbf{AVG}} & \multicolumn{1}{c}{\textbf{AVG}} \\ 
\cmidrule(lr){2-3} \cmidrule(lr){4-5} \cmidrule(lr){6-7} \cmidrule(lr){8-9} \cmidrule(lr){10-11} \cmidrule(lr){12-13} \cmidrule(lr){14-15} \cmidrule(lr){16-16} \cmidrule(lr){17-17} \cmidrule(lr){18-18} \cmidrule(lr){19-19} \cmidrule(lr){20-20} \cmidrule(lr){21-21} \cmidrule(lr){22-22} \cmidrule(lr){23-23} \cmidrule(lr){24-24} \cmidrule(lr){25-25} \cmidrule(lr){26-26} \cmidrule(lr){27-27} 
& C & S & C & S & C & S & C & S & C & S & C & S &
{\scriptsize{AC}$\uparrow$} & \multicolumn{1}{c}{{\scriptsize{RD}$\downarrow$}} 
& C & S & S 
& \multicolumn{1}{c}{{\scriptsize{AC}$\uparrow$}} 
& S & S & S & C & C & C 
& \multicolumn{1}{c}{{\scriptsize{AC}$\uparrow$}} & \multicolumn{1}{c}{{\scriptsize{AC$\uparrow \! \! \pm \! \sigma$}}}  \\
\midrule
StarGANv2 & 0 & \textbf{99} & 0 & \textbf{99} & 0 & 78 & 5 & \textbf{56} & 4 & \textbf{99} & 0 & \textbf{96} & 45 & 97 & \textbf{96} & \textbf{52} & \textbf{99} & \textbf{82} & \textbf{76} & 15 & 97 & 87 & 11 & 22 & 51 & 59{\scriptsize$\pm$20}\\
MUNIT & 5 & 94 & 0 & \textbf{99} & 0 & \textbf{97} & \textbf{59} & 31 & \textbf{96} & 58 & \textbf{99} & 61 & 58 & 56 & 75 & 28 & 7 & 37 & 45 & 7 & 90 & \textbf{89} & 43 & 44 & 53 & 49{\scriptsize$\pm$11}\\
MUNITX & 1 & 50 & 2 & 55 & 8 & 28 & 12 & 16 & 95 & 21 & \textbf{99} & 7 & 33 & 74 & 93 & 26 & 37 & 52 & 64 & 17 & 75 & 83 & 50 & 43 & 55 & 47{\scriptsize$\pm$12} \\
DRIT++ & 7 & 12 & 9 & 19 & 10 & 10 & 27 & 14 & 7 & 15 & 42 & 51 & 18 & \textbf{20} & 52 & 6 & 13 & 24 & 23 & 9 & 96 & \textbf{89} & 67 & 44 & 55 & 32{\scriptsize$\pm$20} \\
\scriptsize{AugCycleGAN} & 10 & 8 & 10 & 9 & 11 & 7 & 17 & 13 & 30 & 13 & 7 & 7 & 12 & \textbf{20} & 90 & 8 & 12 & 37  & 16 & 30 & 98 & 12 & 42 & 40 & 40 & 29{\scriptsize$\pm$15} \\
DIDD & 38 & \textbf{81} & 29 & 22 & 72 & 18 & 41 & 20 & 87 & 43 & 48 & 34 & 44 & 35 & 89 & 12 & \textbf{99} & 67 & 22 & \textbf{50} & 91 & 78 & \textbf{89} & 56 & \textbf{64} & 58{\scriptsize$\pm$12} \\
RIFT (ours) & \textbf{99} & 45 & \textbf{99} & 39 & \textbf{92} & 10 & 50 & 23 & 62 & 84 & 98 & 87 & \textbf{66} & 33 & 89 & 47 & \textbf{99} & 78 & 22 & 35 & \textbf{99} & 65 & 83 & \textbf{57} & 60 & \textbf{68{\scriptsize$\pm$9}}  \\
\midrule
RAND & 10 & 10 & 10 & 10 & 10 & 10 & 12 & 19 & 24 & 19 & 6 & 6 & 12 & 9 & 50 & 11 & 11 & 24 & 12 & 31 & 99 & 50 & 50 & 50 & 49 & 27{\scriptsize$\pm$20} \\
\bottomrule
\end{tabular}
}
\vspace{-5px}
\caption{\textbf{Manipulation accuracy} 
for six attributes aggregated across three splits of \textbf{Shapes-3D}: floor color (FC), wall color (WC), object color (OC), size (SZ), shape (SH), room orientation (ORI); three attributes in \textbf{SynAction}: pose (PS), identity/clothing (IDT), background (BG); and six attributes in \textbf{CelebA}: hair color (HC), facial hair (FH), gender (GD), face orientation (ORI), background (BG) and skin color (SC). We report per-attribute and average \textit{manipulation accuracies} for shared/common (C) or domain-specific (S) attributes, as well as overall \textit{average aggregated manipulation accuracy} (AC) and \textit{relative discrepancy} (RD) on 3D-Shapes described in Section~\ref{sec:experiment}. Table~\ref{tab:non_agg_3dsh} with non-aggregated performance across three splits of 3D-Shapes can be found in supplementary.} \vspace{-10px}
\label{tab:shape_results}
\end{table*}

\paragraph{Qualitative results.} Figures~\ref{fig:3dshapes_qual_grid} and \ref{fig:synaction_qual_grid} show that, in most cases, the proposed method successfully preserves domain-invariant content and applies domain-specific attributes from respective domains on 3D-Shapes and SynAction. Figure~\ref{fig:celeba_qual} shows that, on CelebA, our method preserves poses and backgrounds, and applies hair color better then other baselines. On 3D-Shapes-A, our method also preserves object color and applies correct size and orientation better than all alternatives (Figure~\ref{fig:shapes_compare}). A more detailed side-by-side qualitative comparison of generated images across all baselines and all datasets can be found in supp. Figures~\ref{fig:sup_example_1}-\ref{fig:sup_example_15}.

\paragraph{Quantitative results.} 
Table~\ref{tab:shape_results} shows that our method archives the highest average aggregate manipulation accuracy across three splits of 3D-Shapes, and second lowest relative discrepancy (RD) between accuracies of modeling same attributes as shared and specific across all attributes. On SynAction, that \textbf{matches} the inductive bias of AdaIN methods, our method performs on-par with AdaIN-based methods and outperforms all non-AdaIN methods. On CelebA, our method is much better at preserving background and skin colors then all AdaIN-based methods, and, in terms of the overall manipulation accuracy is second only to DIDD \cite{benaim2019didd}. Despite high manipulation scores, DIDD generated blurry faces and also struggled with applying correct hair color, as can be seen in Figure \ref{fig:celeba_qual} and supplementary. 
To sum up, our method achieves best or second-best performance in each of three dataset, and best performance overall (last column), with among the lowest discrepancy between per-attribute manipulation accuracies (RD), and lowest variance across datasets ($\pm\sigma$).

\begin{figure}[t]
\begin{center}
\vspace{-20px}
\includegraphics[width=\linewidth,trim=0.0in 2.4in 5.8in 0,clip]{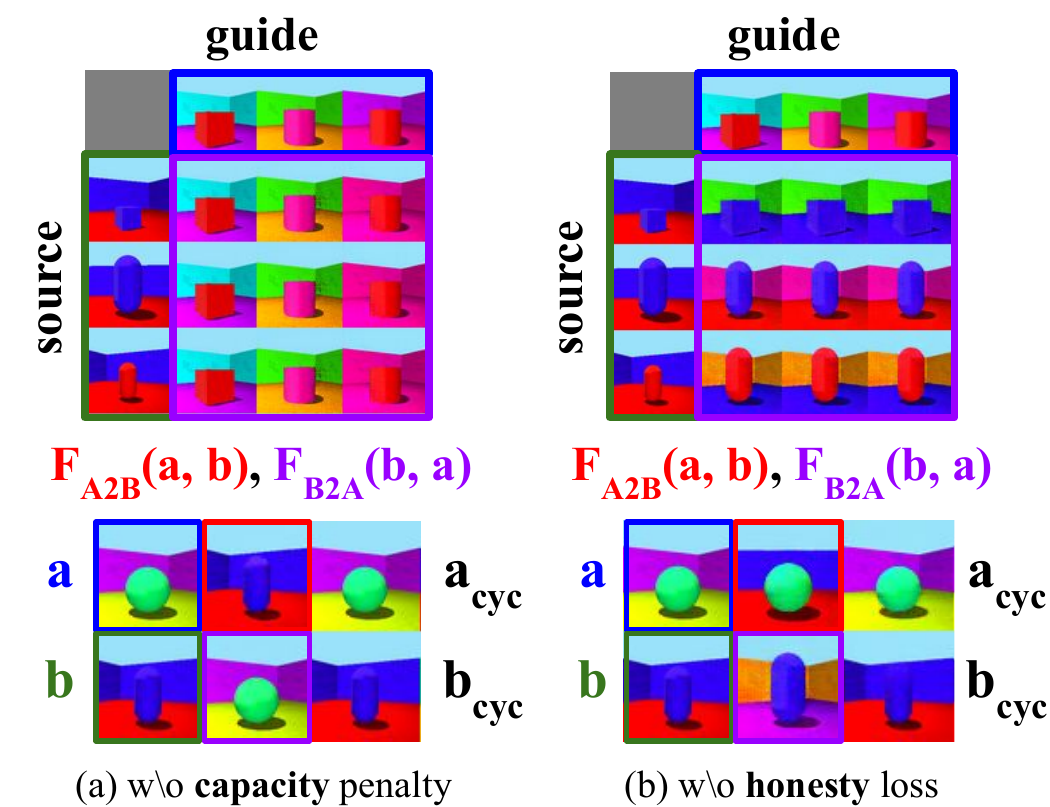}
\end{center}\vspace{-10px}
\caption{
\textbf{Ablations.}
Effects of disabling capacity and honesty losses on guided translations (top) and guided cycle-reconstructions (bottom) on Shapes-3D-A. Inputs images from domains {\bf\color{blue}A} and {\bf\color{OliveGreen}B}, {\bf\color{red}A2B} and {\bf\color{Plum}B2A} guided translations. 
}\vspace{-10px}
\label{fig:failure_modes}
\end{figure}
\paragraph{Ablations.} During B2A translation on Shapes-3D-A the model trained with all losses uses object color/shape from the source image and floor/wall color from the guide (Fig.~\ref{fig:3dshapes_qual_grid}). If we remove the penalty on the capacity of  domain-specific embeddings ($L_\text{norm}$), the model ignores the source input (Fig.~\ref{fig:failure_modes}a-top). The model encodes all attributes into domain-specific embeddings, and cycle-reconstructs inputs $a$ and $b$ perfectly from these embeddings (Fig.~\ref{fig:failure_modes}a-bottom), completely ignoring the source input: ${\color{OliveGreen}\bm{b}} = {\color{red}\bm{F_{\text{A2B}}(a, b)}} = b_\text{cyc}$. Removing honesty losses ($L_\text{guess}$), on the other hand, results in a model that ignores the guide input altogether (Fig.~\ref{fig:failure_modes}b-top). The model ``hides'' domain-specific information inside generated translations instead of the domain-specific embeddings, and makes domain-specific embeddings equal zero, resulting in zero capacity loss $L_\text{norm} = 0$, and zero cycle reconstruction loss $L_\text{cyc} = 0$. For example (Fig.~\ref{fig:failure_modes}b-bottom), the size and orientation of ${\color{OliveGreen}\bm{b}}$ is hidden inside ${\color{Plum}\bm{F_{\text{B2A}}(b, a)}}$ in the form of imperceptible adversarial noise and is used to reconstruct $b_\text{cyc}$ perfectly. If mapping $F_{\text{A2B}}$ actually used size and orientation of $b$ to generate $b_\text{cyc}$, it would have also applied that same size and orientation when generating ${\color{red}\bm{F_{\text{A2B}}(a, b)}}$, but it did not - so we conclude that both $F_{\text{A2B}}$ and $F_{\text{B2A}}$ ignore domain-specific embeddings and embed information inside generated translations instead. More illustrations in suppl. Figure~\ref{fig:sup_ablations}.

\paragraph{Challenges.}
We identified three major causes of remaining errors that existing methods fail to handle at the moment, and future researchers will need to address to make further progress in this task possible. First, some attributes ``affect'' very different number of pixels in training images, and as a consequence contribute very differently to reconstruction losses, making the job of balancing different loss components much harder. For example, the floor color in 3D-Shapes ``affects'' roughly half of all image pixels, whereas size affects only one tenth of all pixels - resulting in drastically different effective weights across all losses, especially if both are either domain-specific or shared at the same time. This explains highest performance of our method on 3D-Shapes-A (in comparison to 3D-Shapes-B,C, see Table~\ref{tab:non_agg_3dsh}) which has ``similarly-sized'' domain-specific attributes in both domains. The second challenge is that all datasets contain some attribute combinations that are almost distinguishable: for example, front-facing boxes and cylinders are hardly distinguishable in 3D-Shapes, and clothing of people is much less articulated when they are facing backwards because of shading in SynAction (see suppl. Figure~\ref{fig:challenges_bad_pairs}). This explains why our model fails on these cases: since it can not reliably reconstruct these attributes from such intermediate translations, it has no incentive to apply correct correct attributes values to them in the first place. Finally, unevenly distributed shared attributes in real world in-the-wild datasets (such as CelebA) pose even more serious challenge rendering the whole many-to-many problem setup not well defined. For example, if both male and female domains had hair color variation, but males were mostly brunet with only 3\% of blondes, and 50\% of females were blondes - should the model preserve blonde hair when translating females to males and sacrifice the ``realism'' of the generated male domain, or should it treat hair-color as a domain-specific attribute despite variations present in both? 

\paragraph{Ethical considerations.} While more precise attribute manipulation models requiring less supervision might be used for malicious deepfakes~\cite{nguyen2019deep,citron2018disinformation}, they can also be used to remove biases present in existing datasets~\cite{grover2019bias} to promote fairness in down-stream tasks \cite{augenstein2019generative}. We acknowledge that the CelebA dataset contains many biases (\eg being predominantly white) and that binary gender labels are problematic.


%


\begin{figure*}[!ph]
\begin{center}
\includegraphics[width=\linewidth,trim=0.3in 1in 0.6in 0,clip]{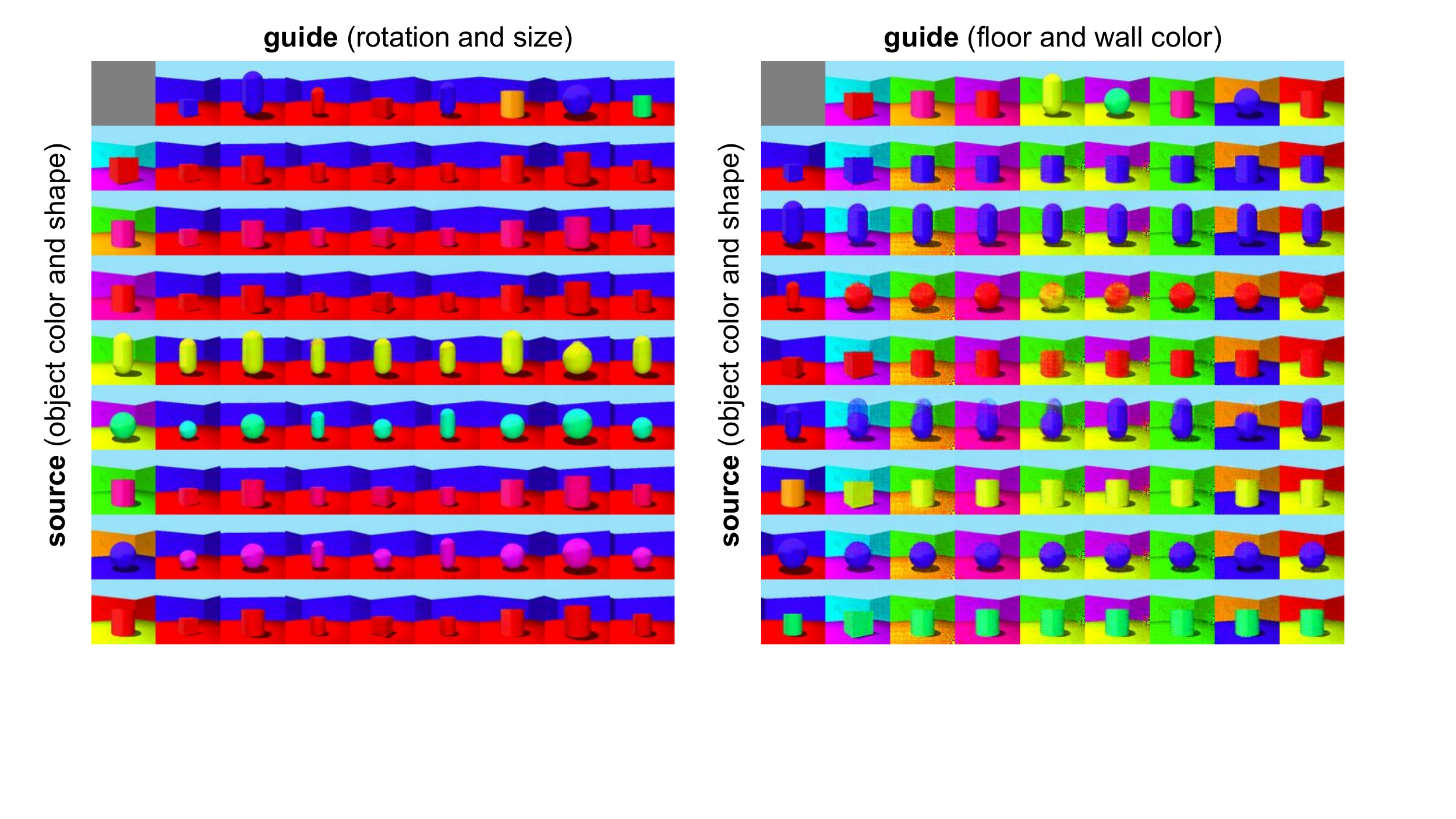}\vspace{-10px}
\end{center}
\caption{
\textbf{Guided translations generated by our method on 3D-Shapes-A.} Our model successfully preserves shared attributes (object color and shape) of the source image and applies domain-specific attributes of the guide domain (rotation and size on the left, floor and wall color on the right) in most cases. It sometimes confuses boxes with cylinders, as discussed in \textbf{challenges} paragraph.}\vspace{-10px}
\label{fig:3dshapes_qual_grid}

\begin{center}
\vspace{30px}
\includegraphics[width=\linewidth,trim=0.3in 1in 0.5in 0,clip]{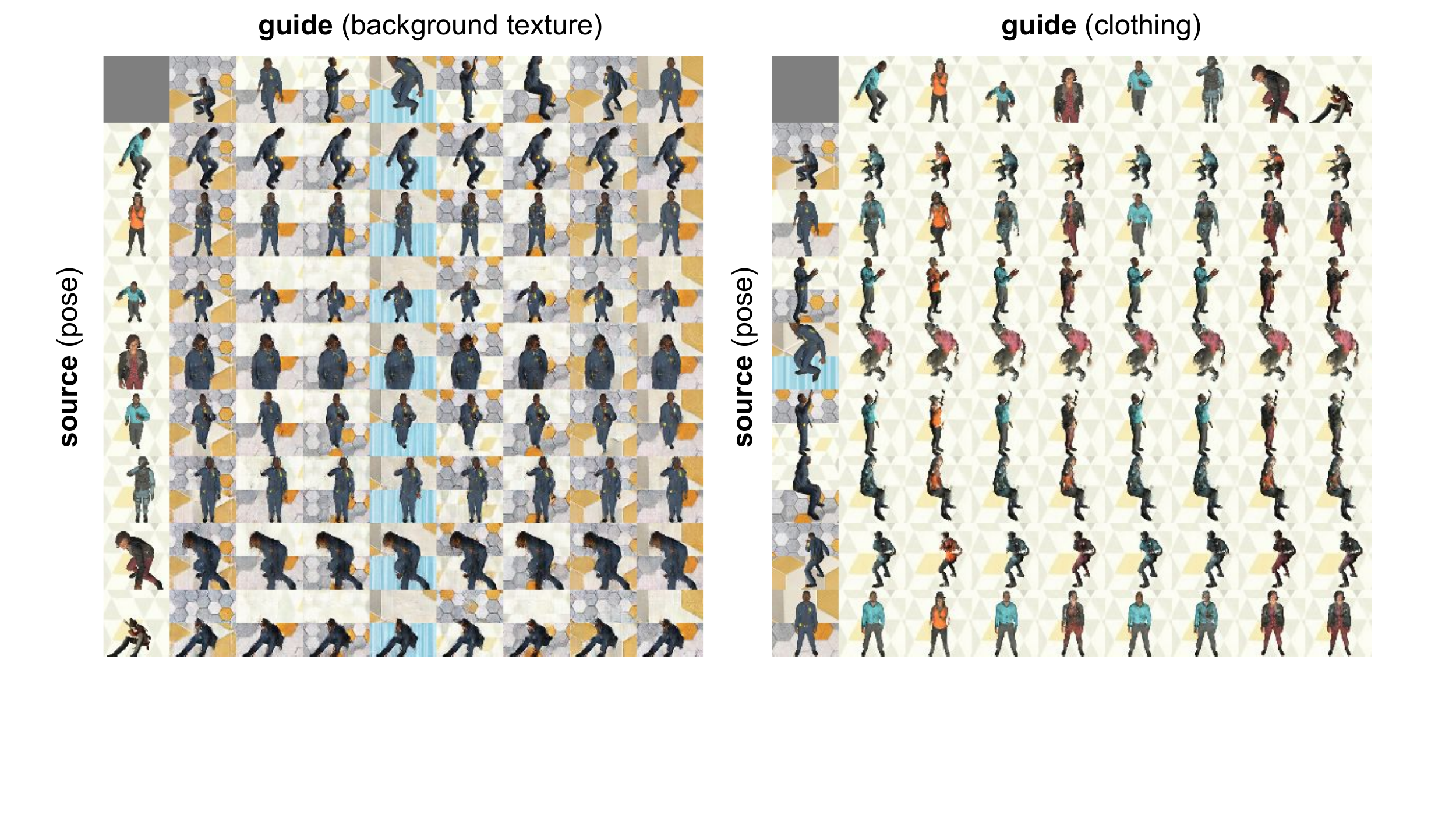}\vspace{-10px}
\end{center}
\caption{
\textbf{Guided translations generated by our method on SynAction.} Our model correctly preserve shared attributes (pose) of the source image and applies domain-specific attributes of the guide domain (background texture on the left, clothing/identity colors on the right). It sometimes applies wrong clothing, especially in extreme poses, as discussed in \textbf{challenges} paragraph.}\vspace{-10px}
\label{fig:synaction_qual_grid}
\end{figure*}

\section{Conclusion}

In this paper we propose RIFT - a new unsupervised many-to-many image-to-image translation method that does not rely on an inductive bias hard-coded into its architecture to determine which attributes are shared and which are domain-specific, and achieves consistently high attribute manipulation accuracy across a wide range of datasets with different kinds of domain-specific and shared attributes, and low discrepancy between manipulation accuracies across different attributes and datasets. Moreover, on datasets that match the inductive bias of AdaIN-based methods, the proposed method performs on-par with AdaIN-based methods. Finally, in this paper we identified three core challenges that need to be resolved to enable further development of unsupervised many-to-many image-to-image translation.

\clearpage

{\small
\bibliography{egbib}

\begin{thebibliography}{29}
\providecommand{\natexlab}[1]{#1}
\providecommand{\url}[1]{\texttt{#1}}
\expandafter\ifx\csname urlstyle\endcsname\relax
  \providecommand{\doi}[1]{doi: #1}\else
  \providecommand{\doi}{doi: \begingroup \urlstyle{rm}\Url}\fi

\bibitem[Almahairi et~al.(2018)Almahairi, Rajeswar, Sordoni, Bachman, and
  Courville]{almahairi2018augmented}
Amjad Almahairi, Sai Rajeswar, Alessandro Sordoni, Philip Bachman, and Aaron
  Courville.
\newblock Augmented cyclegan: Learning many-to-many mappings from unpaired
  data.
\newblock \emph{arXiv preprint arXiv:1802.10151}, 2018.

\bibitem[Augenstein et~al.(2019)Augenstein, McMahan, Ramage, Ramaswamy,
  Kairouz, Chen, Mathews, et~al.]{augenstein2019generative}
Sean Augenstein, H~Brendan McMahan, Daniel Ramage, Swaroop Ramaswamy, Peter
  Kairouz, Mingqing Chen, Rajiv Mathews, et~al.
\newblock Generative models for effective ml on private, decentralized
  datasets.
\newblock \emph{arXiv preprint arXiv:1911.06679}, 2019.

\bibitem[Bashkirova et~al.(2019)Bashkirova, Usman, and
  Saenko]{bashkirova2019adversarial}
Dina Bashkirova, Ben Usman, and Kate Saenko.
\newblock Adversarial self-defense for cycle-consistent gans.
\newblock \emph{in proceedings of the Thirty Second Conference on Advances in
  Neural Information Processing Systems}, 2019.

\bibitem[Bashkirova et~al.(2022)Bashkirova, Usman, and
  Saenko]{bashkirova2021evaluation}
Dina Bashkirova, Ben Usman, and Kate Saenko.
\newblock Evaluation of correctness in unsupervised many-to-many image
  translation.
\newblock In \emph{Proceedings of the IEEE/CVF Winter Conference on
  Applications of Computer Vision (WACV)}, 2022.

\bibitem[Benaim et~al.(2019)Benaim, Khaitov, Galanti, and Wolf]{benaim2019didd}
Sagie Benaim, Michael Khaitov, Tomer Galanti, and Lior Wolf.
\newblock Domain intersection and domain difference.
\newblock In \emph{ICCV}, 2019.

\bibitem[Cao et~al.(2017)Cao, Zhou, Zhang, and Yu]{cao2017unsupervised}
Yun Cao, Zhiming Zhou, Weinan Zhang, and Yong Yu.
\newblock Unsupervised diverse colorization via generative adversarial
  networks.
\newblock In \emph{Joint European conference on machine learning and knowledge
  discovery in databases}, pages 151--166. Springer, 2017.

\bibitem[Choi et~al.(2020)Choi, Uh, Yoo, and Ha]{choi2020stargan}
Yunjey Choi, Youngjung Uh, Jaejun Yoo, and Jung-Woo Ha.
\newblock Stargan v2: Diverse image synthesis for multiple domains.
\newblock In \emph{Proceedings of the IEEE/CVF Conference on Computer Vision
  and Pattern Recognition}, pages 8188--8197, 2020.

\bibitem[Chu et~al.(2017)Chu, Zhmoginov, and Sandler]{chu2017cyclegan}
Casey Chu, Andrey Zhmoginov, and Mark Sandler.
\newblock Cyclegan, a master of steganography.
\newblock \emph{arXiv preprint arXiv:1712.02950}, 2017.

\bibitem[Citron and Chesney(2018)]{citron2018disinformation}
Danielle~K Citron and Robert Chesney.
\newblock Disinformation on steroids: The threat of deep fakes.
\newblock \emph{Cyber Brief}, 2018.

\bibitem[Gatys et~al.(2015)Gatys, Ecker, and Bethge]{gatys2015neural}
Leon~A Gatys, Alexander~S Ecker, and Matthias Bethge.
\newblock A neural algorithm of artistic style.
\newblock \emph{arXiv preprint arXiv:1508.06576}, 2015.

\bibitem[Grover et~al.(2019)Grover, Song, Agarwal, Tran, Kapoor, Horvitz, and
  Ermon]{grover2019bias}
Aditya Grover, Jiaming Song, Alekh Agarwal, Kenneth Tran, Ashish Kapoor, Eric
  Horvitz, and Stefano Ermon.
\newblock Bias correction of learned generative models using likelihood-free
  importance weighting.
\newblock \emph{arXiv preprint arXiv:1906.09531}, 2019.

\bibitem[Guadarrama et~al.(2017)Guadarrama, Dahl, Bieber, Norouzi, Shlens, and
  Murphy]{guadarrama2017pixcolor}
Sergio Guadarrama, Ryan Dahl, David Bieber, Mohammad Norouzi, Jonathon Shlens,
  and Kevin Murphy.
\newblock Pixcolor: Pixel recursive colorization.
\newblock \emph{arXiv preprint arXiv:1705.07208}, 2017.

\bibitem[Huang and Belongie(2017)]{huang2017arbitrary}
Xun Huang and Serge Belongie.
\newblock Arbitrary style transfer in real-time with adaptive instance
  normalization.
\newblock In \emph{Proceedings of the IEEE International Conference on Computer
  Vision}, pages 1501--1510, 2017.

\bibitem[Huang et~al.(2018)Huang, Liu, Belongie, and
  Kautz]{huang2018multimodal}
Xun Huang, Ming-Yu Liu, Serge Belongie, and Jan Kautz.
\newblock Multimodal unsupervised image-to-image translation.
\newblock In \emph{Proceedings of the European Conference on Computer Vision
  (ECCV)}, pages 172--189, 2018.

\bibitem[Isola et~al.(2017)Isola, Zhu, Zhou, and Efros]{isola2017image}
Phillip Isola, Jun-Yan Zhu, Tinghui Zhou, and Alexei~A Efros.
\newblock Image-to-image translation with conditional adversarial networks.
\newblock In \emph{Proceedings of the IEEE conference on computer vision and
  pattern recognition}, pages 1125--1134, 2017.

\bibitem[Kim and Mnih(2018)]{kim2018disentangling}
Hyunjik Kim and Andriy Mnih.
\newblock Disentangling by factorising.
\newblock In \emph{International Conference on Machine Learning}, pages
  2649--2658. PMLR, 2018.

\bibitem[Lee et~al.(2020)Lee, Liu, Wu, and Luo]{CelebAMask-HQ}
Cheng-Han Lee, Ziwei Liu, Lingyun Wu, and Ping Luo.
\newblock Maskgan: Towards diverse and interactive facial image manipulation.
\newblock In \emph{IEEE Conference on Computer Vision and Pattern Recognition
  (CVPR)}, 2020.

\bibitem[Lee et~al.(2019)Lee, Tseng, Mao, Huang, Lu, Singh, and
  Yang]{DRIT_plus}
Hsin-Ying Lee, Hung-Yu Tseng, Qi~Mao, Jia-Bin Huang, Yu-Ding Lu, Maneesh~Kumar
  Singh, and Ming-Hsuan Yang.
\newblock Drit++: Diverse image-to-image translation viadisentangled
  representations.
\newblock \emph{arXiv preprint arXiv:1905.01270}, 2019.

\bibitem[Liu et~al.(2017)Liu, Breuel, and Kautz]{liu2017unsupervised}
Ming-Yu Liu, Thomas Breuel, and Jan Kautz.
\newblock Unsupervised image-to-image translation networks.
\newblock \emph{arXiv preprint arXiv:1703.00848}, 2017.

\bibitem[Liu et~al.(2019)Liu, Huang, Mallya, Karras, Aila, Lehtinen, and
  Kautz.]{liu2019few}
Ming-Yu Liu, Xun Huang, Arun Mallya, Tero Karras, Timo Aila, Jaakko Lehtinen,
  and Jan Kautz.
\newblock Few-shot unsueprvised image-to-image translation.
\newblock In \emph{arxiv}, 2019.

\bibitem[Lugmayr et~al.(2019)Lugmayr, Danelljan, and
  Timofte]{lugmayr2019unsupervised}
Andreas Lugmayr, Martin Danelljan, and Radu Timofte.
\newblock Unsupervised learning for real-world super-resolution.
\newblock In \emph{2019 IEEE/CVF International Conference on Computer Vision
  Workshop (ICCVW)}, pages 3408--3416. IEEE, 2019.

\bibitem[Mao et~al.(2017)Mao, Li, Xie, Lau, Wang, and
  Paul~Smolley]{mao2017least}
Xudong Mao, Qing Li, Haoran Xie, Raymond~YK Lau, Zhen Wang, and Stephen
  Paul~Smolley.
\newblock Least squares generative adversarial networks.
\newblock In \emph{Proceedings of the IEEE international conference on computer
  vision}, pages 2794--2802, 2017.

\bibitem[Nguyen et~al.(2019)Nguyen, Nguyen, Nguyen, Nguyen, and
  Nahavandi]{nguyen2019deep}
Thanh~Thi Nguyen, Cuong~M Nguyen, Dung~Tien Nguyen, Duc~Thanh Nguyen, and Saeid
  Nahavandi.
\newblock Deep learning for deepfakes creation and detection: A survey.
\newblock \emph{arXiv preprint arXiv:1909.11573}, 2019.

\bibitem[Papandreou et~al.(2018)Papandreou, Zhu, Chen, Gidaris, Tompson, and
  Murphy]{papandreou2018personlab}
George Papandreou, Tyler Zhu, Liang-Chieh Chen, Spyros Gidaris, Jonathan
  Tompson, and Kevin Murphy.
\newblock Personlab: Person pose estimation and instance segmentation with a
  bottom-up, part-based, geometric embedding model.
\newblock In \emph{Proceedings of the European Conference on Computer Vision
  (ECCV)}, pages 269--286, 2018.

\bibitem[Qu et~al.(2018)Qu, Qi, and Kwan]{qu2018unsupervised}
Ying Qu, Hairong Qi, and Chiman Kwan.
\newblock Unsupervised sparse dirichlet-net for hyperspectral image
  super-resolution.
\newblock In \emph{Proceedings of the IEEE conference on computer vision and
  pattern recognition}, pages 2511--2520, 2018.

\bibitem[Ruiz et~al.(2018)Ruiz, Chong, and Rehg]{ruiz2018headpose}
Nataniel Ruiz, Eunji Chong, and James~M. Rehg.
\newblock Fine-grained head pose estimation without keypoints.
\newblock In \emph{The IEEE Conference on Computer Vision and Pattern
  Recognition (CVPR) Workshops}, June 2018.

\bibitem[Sun et~al.(2020)Sun, Xu, and Saenko]{sun2020twostreamvan}
Ximeng Sun, Huijuan Xu, and Kate Saenko.
\newblock Twostreamvan: Improving motion modeling in video generation.
\newblock In \emph{The IEEE Winter Conference on Applications of Computer
  Vision}, pages 2744--2753, 2020.

\bibitem[Ulyanov et~al.(2016)Ulyanov, Lebedev, Vedaldi, and
  Lempitsky]{ulyanov2016texture}
Dmitry Ulyanov, Vadim Lebedev, Andrea Vedaldi, and Victor~S Lempitsky.
\newblock Texture networks: Feed-forward synthesis of textures and stylized
  images.
\newblock In \emph{ICML}, volume~1, page~4, 2016.

\bibitem[Zhu et~al.(2017)Zhu, Park, Isola, and Efros]{zhu2017unpaired}
Jun-Yan Zhu, Taesung Park, Phillip Isola, and Alexei~A Efros.
\newblock Unpaired image-to-image translation using cycle-consistent
  adversarial networks.
\newblock In \emph{Proceedings of the IEEE international conference on computer
  vision}, pages 2223--2232, 2017.

\end{thebibliography}
}

\clearpage
\section{Supplementary}
\subsection{Derivation of the capacity}\label{subsec:capacity}
Let $A$ and $B$ be arbitrary datasets, $s$ and $G$ be domain-specific embedding and generator functions, and $a'$ be the translation from source $b$ to domain $A$, guided by the target example $a$. The following theorem bounds the amount of information about $a$ that $G$ can access to generate $a'$.
\begin{theorem}
The effective capacity of the guided embedding, \ie the capacity of the $a \to a'$ channel, \ie the mutual information $\operatorname{MI}(a; a')$ is bounded by:
\begin{gather*}
    \operatorname{MI}(a; a') \lesssim \operatorname{dim}(s(a)) \cdot \log_2 \left(1 + L / \sigma^2\right), \\
    \text{ where } a' = G(b, s(a) + \varepsilon), \ \varepsilon \sim \mathcal N(0, \sigma^2), \\
    \text{ and } L = \mathbb E \| s(a) \|_2^2 , \ a \sim A, \ b \sim B
\end{gather*}
\end{theorem}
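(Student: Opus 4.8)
The plan is to reduce the statement to the classical capacity of the additive white Gaussian noise channel via the data processing inequality, and then to invoke the maximum-entropy property of the Gaussian. The quantity that mediates everything is the noisy embedding $z = s(a) + \varepsilon$: every operation downstream of $z$ (the generator $G$ together with its independent second argument $b$) can only destroy information about $a$, so the bound should follow once I control $\operatorname{MI}(s(a); z)$.

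First I would establish two Markov chains. Because $z = s(a) + \varepsilon$ with $\varepsilon \perp a$, the variable $z$ depends on $a$ only through $s(a)$, giving $a \to s(a) \to z$. Because $a' = G(b, z)$ with $b \sim B$ drawn independently of $a$ and $\varepsilon$, conditioning on $z$ leaves $a'$ a function of $z$ and the independent $b$ alone, so $a' \perp a \mid z$ and hence $a \to z \to a'$. Applying the data processing inequality along each chain gives
\begin{gather*}
\operatorname{MI}(a; a') \le \operatorname{MI}(a; z) \le \operatorname{MI}(s(a); z),
\end{gather*}
so it suffices to bound the mutual information of the Gaussian channel with input $s(a)$ and second-moment budget $L = \mathbb{E}\|s(a)\|_2^2$.

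Next, writing $n = \operatorname{dim}(s(a))$ and using that $\varepsilon$ is i.i.d. across coordinates (so the channel is memoryless), I would split the vector bound into scalar ones: from $h(z \mid s(a)) = \sum_i h(\varepsilon_i)$ and subadditivity of differential entropy, $\operatorname{MI}(s(a); z) \le \sum_{i=1}^n \operatorname{MI}(s_i; z_i)$. For each coordinate, $\operatorname{MI}(s_i; z_i) = h(z_i) - h(\varepsilon_i)$, and the maximum-entropy bound $h(z_i) \le \tfrac12 \log_2(2\pi e\, \mathbb{E}[z_i^2])$ together with $\mathbb{E}[z_i^2] = \mathbb{E}[s_i^2] + \sigma^2$ and $h(\varepsilon_i) = \tfrac12\log_2(2\pi e\, \sigma^2)$ yields the per-channel capacity $\operatorname{MI}(s_i; z_i) \le \tfrac12 \log_2(1 + \mathbb{E}[s_i^2]/\sigma^2)$. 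Summing and applying Jensen's inequality (concavity of $\log$) to $\sum_i \mathbb{E}[s_i^2] = L$ gives $\tfrac{n}{2}\log_2(1 + L/(n\sigma^2))$, which I would then loosen by dropping the factor $\tfrac12$ and the $1/n$ inside the logarithm (both valid since $n \ge 1$) to reach the stated $n\log_2(1 + L/\sigma^2)$; this slack is precisely what the symbol $\lesssim$ absorbs.

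The main obstacle is not any single calculation but making the reduction airtight. One must verify that the independent guide $b$ genuinely preserves the Markov property $a \to z \to a'$, since otherwise $a'$ could carry information about $a$ along a route other than through $z$. One must also be careful that the per-coordinate decomposition is an upper bound valid for the \emph{arbitrary}, possibly highly dependent input distribution that the data $A$ induces on $s(a)$; this is why the argument must appeal to memorylessness of the noise rather than to any independence of the input coordinates. The Gaussian maximum-entropy step is then standard once the channel has been isolated.
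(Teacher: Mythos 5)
Your proof is correct and follows essentially the same route as the paper's: the data processing inequality along the chain $a \to s(a) \to s(a)+\varepsilon \to a'$ to isolate the additive Gaussian noise channel, followed by the Gaussian maximum-entropy bound. Your per-coordinate decomposition plus Jensen's inequality is a slightly more careful execution of the final step --- it yields the tight memoryless-channel capacity $\tfrac{n}{2}\log_2\left(1 + L/(n\sigma^2)\right)$ before explicitly loosening to the stated bound, whereas the paper bounds the vector entropy in one shot by a spherical Gaussian with variance $L$ per coordinate (absorbing the slack into $\lesssim$) --- but the skeleton is identical.
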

\begin{proof}
First, let us define a Markov chain
$$
a \to s(a) \to (s(a) + \varepsilon) \to a'
$$
using the data processing inequality twice we can show that
$$
\operatorname{MI}(a; a') \leq \operatorname{MI}(a; s(a) + \varepsilon) \leq \operatorname{MI}(s(a); s(a) + \varepsilon)
$$
intuitively meaning that the overall pipeline always looses at least as much information as each of its steps. Then expanding the mutual information in terms of the differential entropy $h(X)$ gives us
\begin{gather*}
    \operatorname{MI}(s(a); s(a) + \varepsilon) = h(s(a) + \varepsilon) - h(s(a) + \varepsilon | s(a)) \\
     = h(s(a) + \varepsilon) - h(\varepsilon)
\end{gather*}
Since the the second raw moment (aka power) of $s(a)$ is bounded by $L$, the entropy $h(s(a) + \varepsilon)$ will be maximized if $s(a)$ is a $k$-dimensional spherical multivariate normal with variance $L$, where $k=\operatorname{dim}(s(a))$ therefore
\begin{gather*}
    \operatorname{MI}(s(a); s(a) + \varepsilon) \leq h(\mathcal N_k(0; L+\sigma^2)) + h(\mathcal N_k(0; \sigma^2)) \\
    = \frac{1}{2} \ln\left(\frac{(L + \sigma^2)^k}{\sigma^{2k}}\right) \leq k \cdot \log_2\left(1 + L / \sigma^2 \right).
\end{gather*}
\end{proof}

\begin{figure*}
\begin{center}
\includegraphics[width=\linewidth,trim=0 0.3in 0 0,clip]{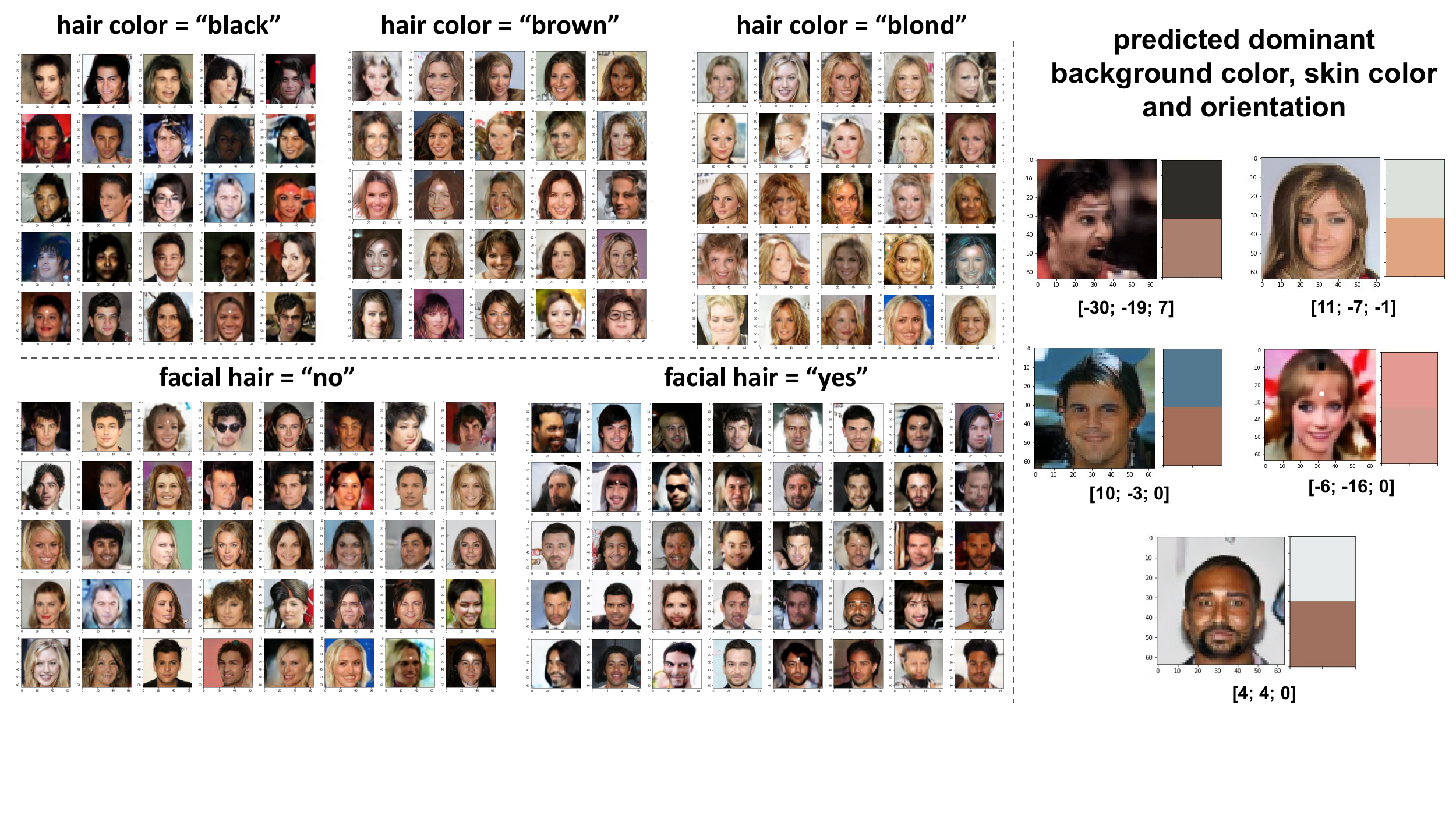}
\end{center}
\caption{Predictions of the attribute regression network on network predictions on CelebA for hair color (black, brown or blonde), facial hair (binary), background color (RGB), skin color (RGB), and face orientation (yaw; pitch; roll). }\label{fig:clf_examples}
\end{figure*}
\begin{figure*}
\begin{center}
\includegraphics[width=\linewidth,trim=0 1in 0 0,clip,page=1]{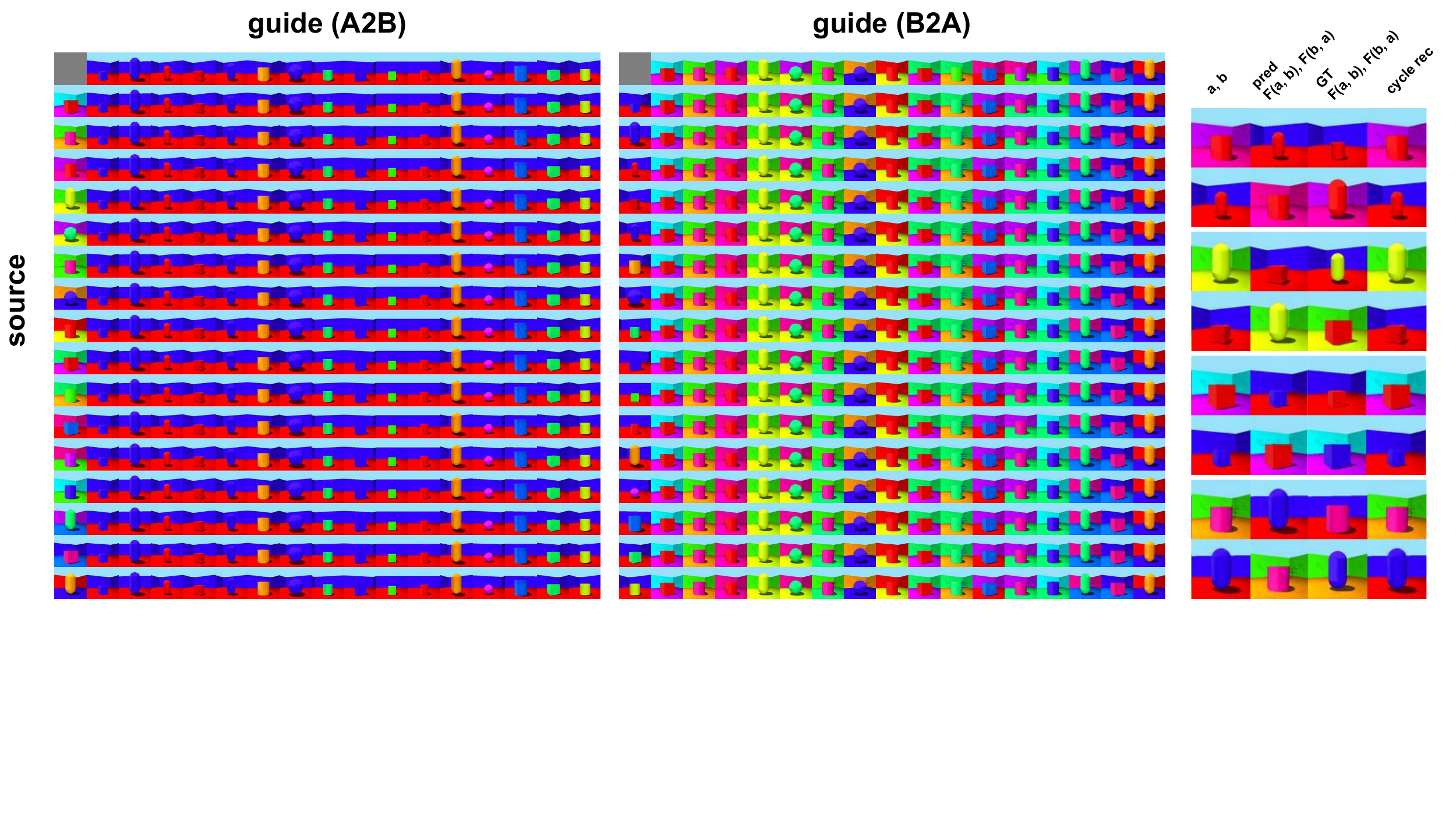}
\includegraphics[width=\linewidth,trim=0 2in 0 0,clip,page=2]{img/disent_ablation_large.pdf}
\end{center}
\caption{Additional \textbf{ablation visualizations} on Shapes-3D-A. Without capacity losses (top) the model always embeds the entire guidance image into the domain-specific embedding and ignores the source input. Without honesty losses (bottom) it ignores the guide input and embeds domain-specific information into the translated image to reconstruct it back to minimize the cycle reconstruction losses. }\label{fig:sup_ablations}
\end{figure*}
\begin{figure*}
\begin{center}
\includegraphics[width=\linewidth,trim=0 2.9in 5in 0,clip]{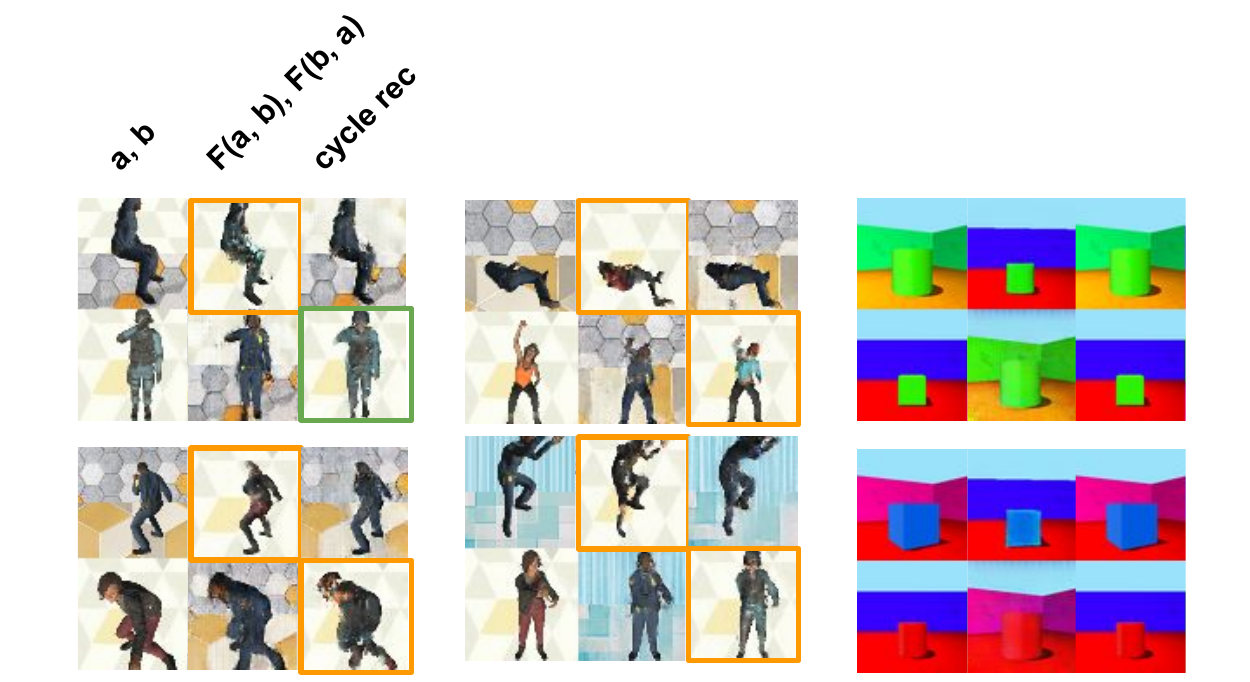}
\end{center}
\caption{Examples with \textbf{indistinguishable} attributes after translation cause instability in cycle reconstruction. For example, identities of actors with extreme poses are very dark and therefore hard to infer from translation results in SynAction; front-facing boxes and cylinders are almost indistinguishable too. }\label{fig:challenges_bad_pairs}
\end{figure*}
\begin{table*}[ht]
\inctabcolsep{-1pt} {
\begin{tabular}{lcccccccccccccccccc} \toprule
 & \multicolumn{6}{c}{Split A} & \multicolumn{6}{c}{Split B} & \multicolumn{6}{c}{Split C} \\
 \cmidrule(lr){2-7} \cmidrule(lr){8-13} \cmidrule(lr){14-19}
 & A & A & C & B & C & B & A & C & B & C & A & B & C & B & A & A & B & C \\
 & FC & WC & OC & SZ & SH & ORI & FC & WC & OC & SZ & SH & ORI & FC & WC & OC & SZ & SH & ORI \\ \midrule
MUNIT & 99 & 99 & 0 & 50 & 96 & 64 & 88 & 0 & 95 & 59 & 15 & 58 & 5 & 99 & 99 & 12 & 99 & 99 \\
MUNITX & 10 & 9 & 8 & 18 & 95 & 8 & 89 & 2 & 11 & 12 & 8 & 6 & 1 & 99 & 45 & 13 & 33 & 99 \\
DRIT & 13 & 16 & 10 & 14 & 7 & 95 & 10 & 9 & 7 & 27 & 8 & 6 & 7 & 21 & 12 & 13 & 22 & 42 \\
AugCycleGAN & 10 & 9 & 11 & 13 & 30 & 7 & 5 & 10 & 5 & 17 & 0 & 7 & 10 & 9 & 9 & 13 & 26 & 7 \\
StarGANv2 & 99 & 99 & 0 & 56 & 4 & 99 & 99 & 0 & 66 & 5 & 99 & 92 & 0 & 99 & 89 & 56 & 99 & 0 \\
DIDD & 99 & 27 & 72 & 12 & 87 & 8 & 62 & 29 & 10 & 41 & 59 & 59 & 38 & 17 & 25 & 28 & 27 & 48 \\ 
RIFT (ours) & 81 & 68 & 92 & 35 & 62 & 93 & 9 & 99 & 10 & 50 & 69 & 81 & 99 & 10 & 9 & 11 & 98 & 98 \\ \midrule
RAND & 10 & 9 & 10 & 12 & 24 & 6 & 10 & 10 & 9 & 12 & 25 & 6 & 10 & 10 & 10 & 25 & 12 & 6 \\ \bottomrule
\end{tabular}
}
\caption{Per-split attribute manipulation accuracy on 3D-Shapes-ABC with indicated attribute role (A, B - domain-specific; C - common/shared) across six attributes floor color (FC), wall color (WC), object color (OC), size (SZ), shape (SH), room orientation (ORI). \label{tab:non_agg_3dsh}}
\end{table*}

\clearpage

\newcommand*{\shapesacaption}{Qualitative comparison to existing methods on Shapes-3D-A across seven methods: Augmented CycleGAN \cite{almahairi2018augmented}, DRIT++ \cite{DRIT_plus}, DIDD \cite{benaim2019didd}, MUNIT \cite{huang2018multimodal}, MUNITX \cite{bashkirova2021evaluation}, StarGANv2 \cite{choi2020stargan} and the proposed RIFT. The rightmost column shows ground truth predictions for that domain pair.}

\pagestyle{empty}

\begin{figure*}
\begin{center}
\begin{picture}(385,230)
\put(0,0){\includegraphics[width=0.7777\linewidth]{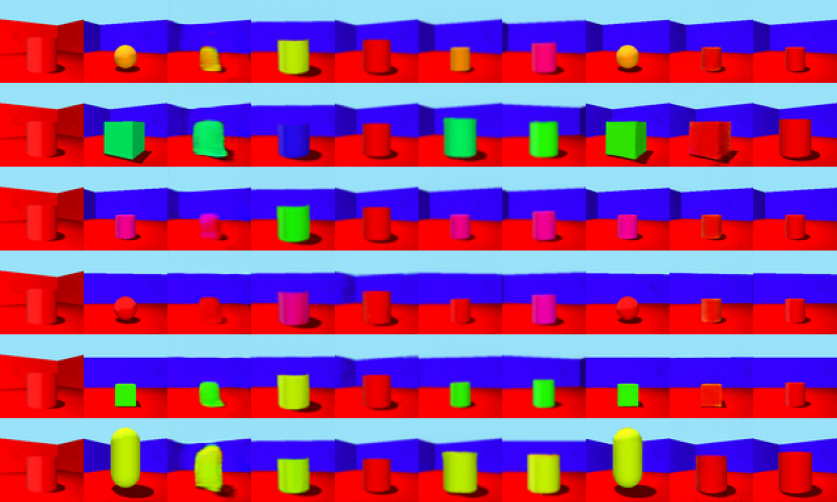}}
\put(10,235){input}
\put(39,235){guidance}
\put(82,235){AugCG}
\put(120,235){DRIT++}
\put(163,235){DIDD}
\put(196,235){MUNIT}
\put(233,235){\small MUNITX}
\put(272,235){\footnotesize StarGANv2}
\put(316,235){\bf RIFT}
\put(358,235){\bf GT}
\end{picture}

\vspace{50px}

\begin{picture}(385,230)
\put(0,0){\includegraphics[width=0.7777\linewidth]{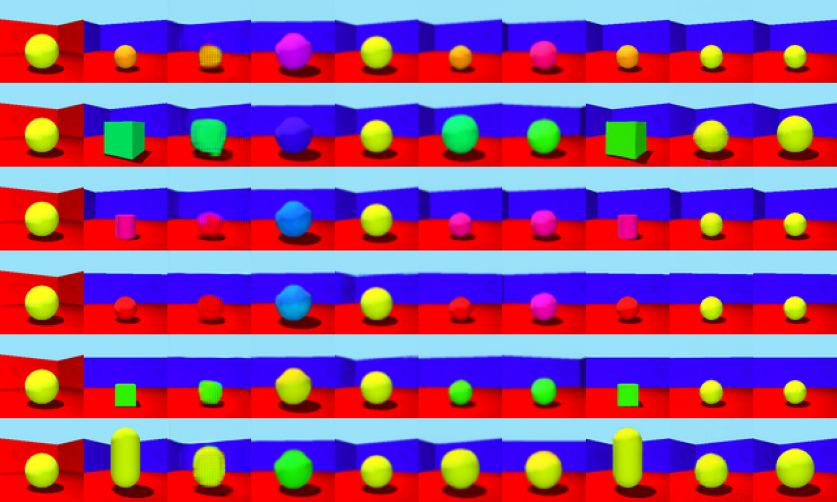}}
\put(10,235){input}
\put(39,235){guidance}
\put(82,235){AugCG}
\put(120,235){DRIT++}
\put(163,235){DIDD}
\put(196,235){MUNIT}
\put(233,235){\small MUNITX}
\put(272,235){\footnotesize StarGANv2}
\put(316,235){\bf RIFT}
\put(358,235){\bf GT}
\end{picture}
\end{center}

\caption{\shapesacaption}\label{fig:sup_example_18}
\end{figure*}

\begin{figure*}
\begin{center}
\begin{picture}(385,230)
\put(0,0){\includegraphics[width=0.7777\linewidth]{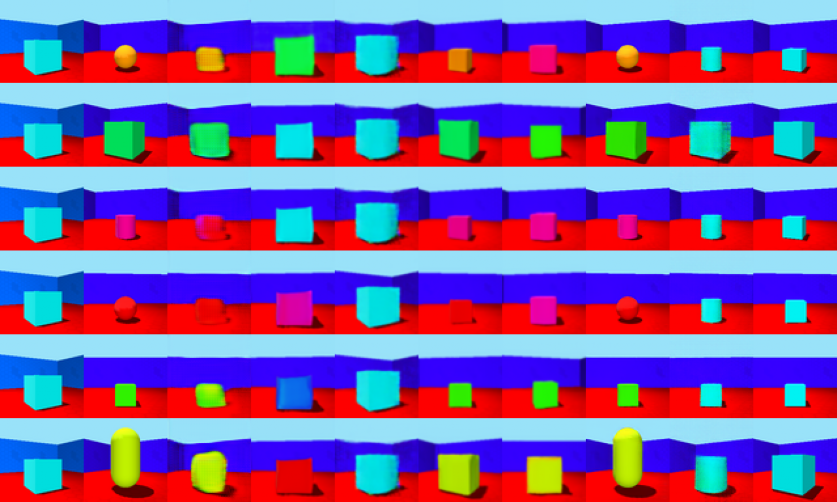}}
\put(10,235){input}
\put(39,235){guidance}
\put(82,235){AugCG}
\put(120,235){DRIT++}
\put(163,235){DIDD}
\put(196,235){MUNIT}
\put(233,235){\small MUNITX}
\put(272,235){\footnotesize StarGANv2}
\put(316,235){\bf RIFT}
\put(358,235){\bf GT}
\end{picture}

\vspace{50px}

\begin{picture}(385,230)
\put(0,0){\includegraphics[width=0.7777\linewidth]{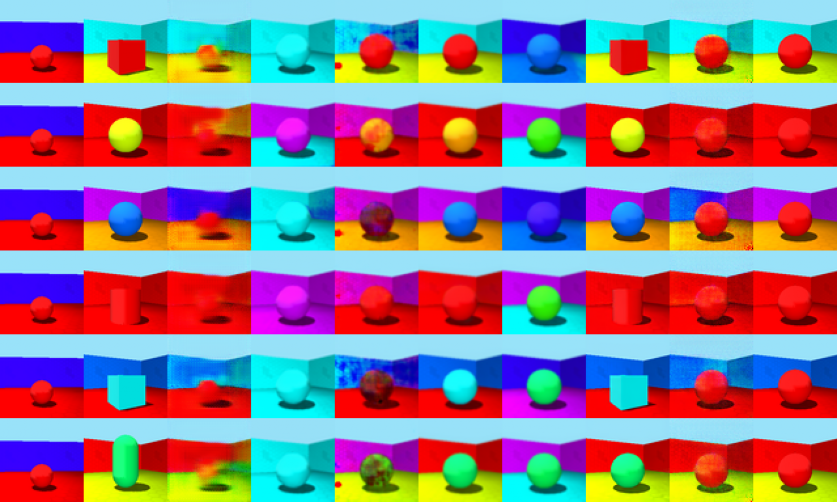}}
\put(10,235){input}
\put(39,235){guidance}
\put(82,235){AugCG}
\put(120,235){DRIT++}
\put(163,235){DIDD}
\put(196,235){MUNIT}
\put(233,235){\small MUNITX}
\put(272,235){\footnotesize StarGANv2}
\put(316,235){\bf RIFT}
\put(358,235){\bf GT}
\end{picture}
\end{center}

\caption{\shapesacaption}\label{fig:sup_example_19}
\end{figure*}

\begin{figure*}
\begin{center}
\begin{picture}(385,230)
\put(0,0){\includegraphics[width=0.7777\linewidth]{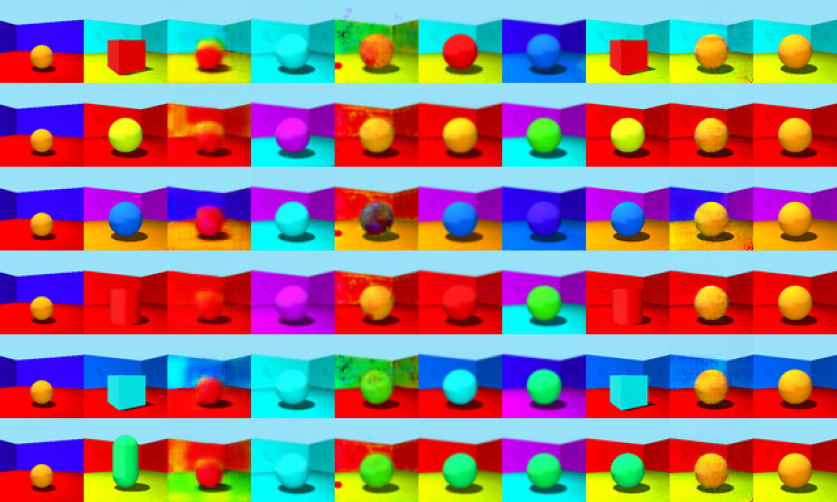}}
\put(10,235){input}
\put(39,235){guidance}
\put(82,235){AugCG}
\put(120,235){DRIT++}
\put(163,235){DIDD}
\put(196,235){MUNIT}
\put(233,235){\small MUNITX}
\put(272,235){\footnotesize StarGANv2}
\put(316,235){\bf RIFT}
\put(358,235){\bf GT}
\end{picture}

\vspace{50px}

\begin{picture}(385,230)
\put(0,0){\includegraphics[width=0.7777\linewidth]{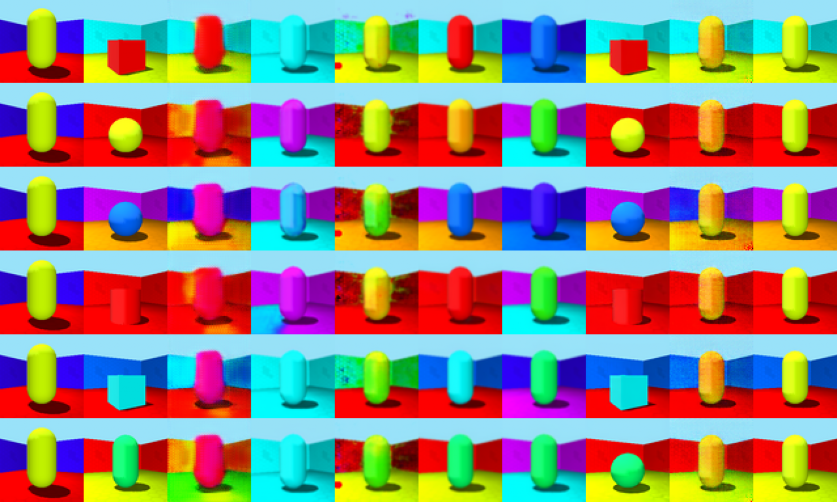}}
\put(10,235){input}
\put(39,235){guidance}
\put(82,235){AugCG}
\put(120,235){DRIT++}
\put(163,235){DIDD}
\put(196,235){MUNIT}
\put(233,235){\small MUNITX}
\put(272,235){\footnotesize StarGANv2}
\put(316,235){\bf RIFT}
\put(358,235){\bf GT}
\end{picture}
\end{center}

\caption{\shapesacaption}\label{fig:sup_example_20}
\end{figure*}

\begin{figure*}
\begin{center}
\begin{picture}(385,230)
\put(0,0){\includegraphics[width=0.7777\linewidth]{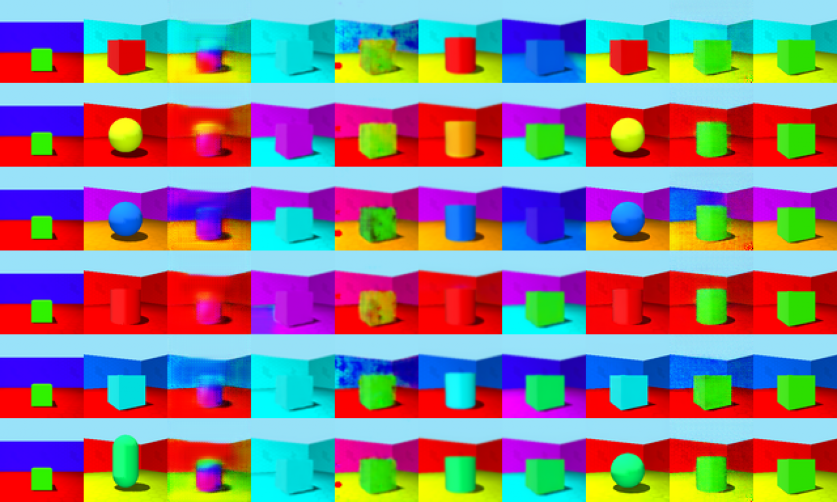}}
\put(10,235){input}
\put(39,235){guidance}
\put(82,235){AugCG}
\put(120,235){DRIT++}
\put(163,235){DIDD}
\put(196,235){MUNIT}
\put(233,235){\small MUNITX}
\put(272,235){\footnotesize StarGANv2}
\put(316,235){\bf RIFT}
\put(358,235){\bf GT}
\end{picture}

\vspace{50px}

\begin{picture}(385,230)
\put(0,0){\includegraphics[width=0.7777\linewidth]{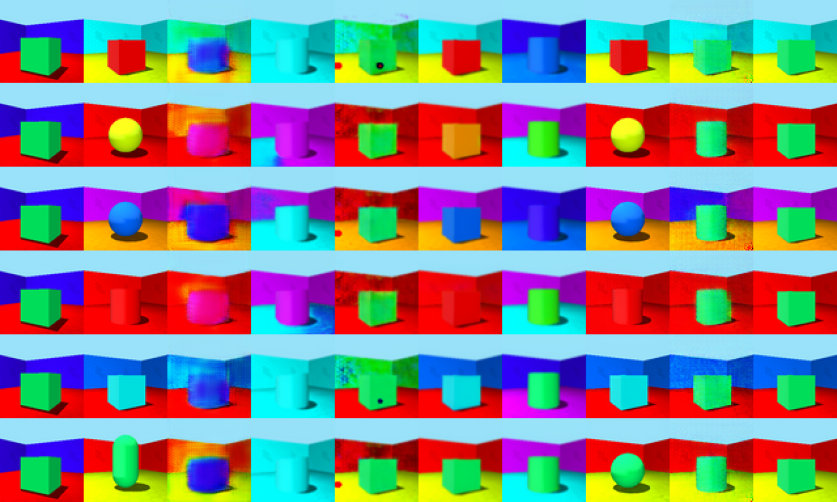}}
\put(10,235){input}
\put(39,235){guidance}
\put(82,235){AugCG}
\put(120,235){DRIT++}
\put(163,235){DIDD}
\put(196,235){MUNIT}
\put(233,235){\small MUNITX}
\put(272,235){\footnotesize StarGANv2}
\put(316,235){\bf RIFT}
\put(358,235){\bf GT}
\end{picture}
\end{center}

\caption{\shapesacaption}\label{fig:sup_example_21}
\end{figure*}

\begin{figure*}
\begin{center}
\begin{picture}(385,230)
\put(0,0){\includegraphics[width=0.7777\linewidth]{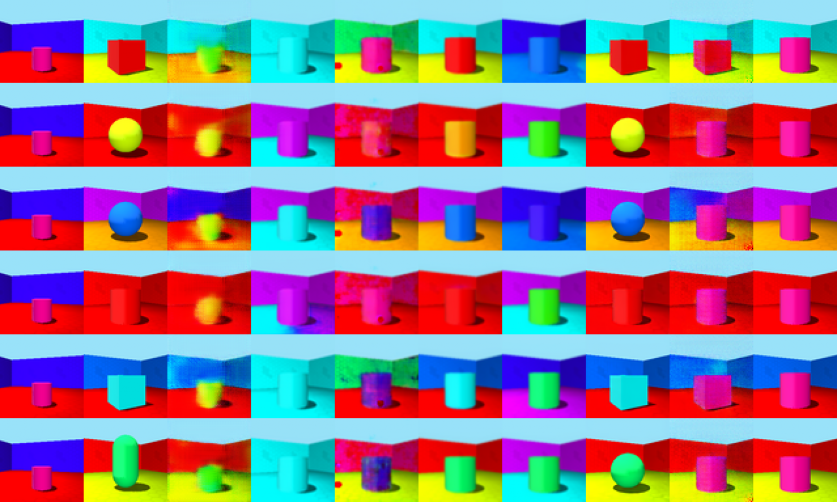}}
\put(10,235){input}
\put(39,235){guidance}
\put(82,235){AugCG}
\put(120,235){DRIT++}
\put(163,235){DIDD}
\put(196,235){MUNIT}
\put(233,235){\small MUNITX}
\put(272,235){\footnotesize StarGANv2}
\put(316,235){\bf RIFT}
\put(358,235){\bf GT}
\end{picture}

\vspace{50px}

\begin{picture}(385,230)
\put(0,0){\includegraphics[width=0.7777\linewidth]{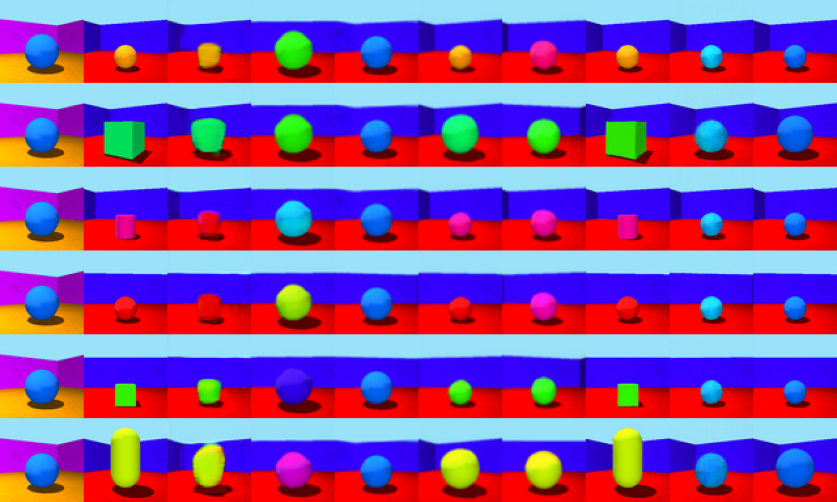}}
\put(10,235){input}
\put(39,235){guidance}
\put(82,235){AugCG}
\put(120,235){DRIT++}
\put(163,235){DIDD}
\put(196,235){MUNIT}
\put(233,235){\small MUNITX}
\put(272,235){\footnotesize StarGANv2}
\put(316,235){\bf RIFT}
\put(358,235){\bf GT}
\end{picture}
\end{center}

\caption{\shapesacaption}\label{fig:sup_example_22}
\end{figure*}

\begin{figure*}
\begin{center}
\begin{picture}(385,230)
\put(0,0){\includegraphics[width=0.7777\linewidth]{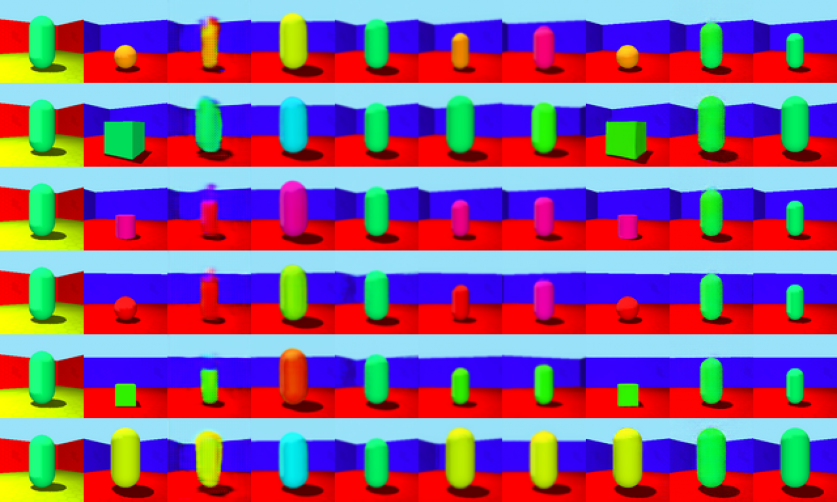}}
\put(10,235){input}
\put(39,235){guidance}
\put(82,235){AugCG}
\put(120,235){DRIT++}
\put(163,235){DIDD}
\put(196,235){MUNIT}
\put(233,235){\small MUNITX}
\put(272,235){\footnotesize StarGANv2}
\put(316,235){\bf RIFT}
\put(358,235){\bf GT}
\end{picture}

\vspace{50px}

\begin{picture}(385,230)
\put(0,0){\includegraphics[width=0.7777\linewidth]{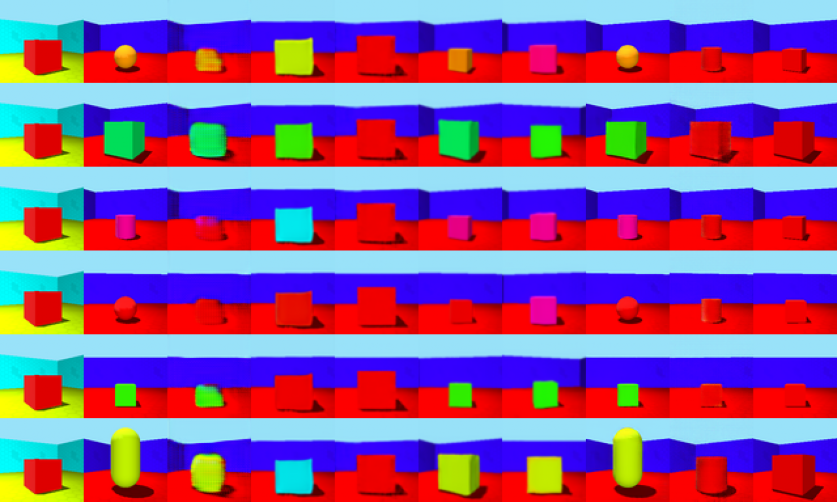}}
\put(10,235){input}
\put(39,235){guidance}
\put(82,235){AugCG}
\put(120,235){DRIT++}
\put(163,235){DIDD}
\put(196,235){MUNIT}
\put(233,235){\small MUNITX}
\put(272,235){\footnotesize StarGANv2}
\put(316,235){\bf RIFT}
\put(358,235){\bf GT}
\end{picture}
\end{center}

\caption{\shapesacaption}\label{fig:sup_example_23}
\end{figure*}

\newcommand*{\synactcaption}{Qualitative comparison to existing methods on SynAction across seven methods: Augmented CycleGAN \cite{almahairi2018augmented}, DRIT++ \cite{DRIT_plus}, DIDD \cite{benaim2019didd}, MUNIT \cite{huang2018multimodal}, MUNITX \cite{bashkirova2021evaluation}, StarGANv2 \cite{choi2020stargan} and the proposed RIFT. The rightmost column shows ground truth predictions for that domain pair.}

\begin{figure*}
\begin{center}
\begin{picture}(385,385)
\put(0,0){\includegraphics[width=0.7777\linewidth]{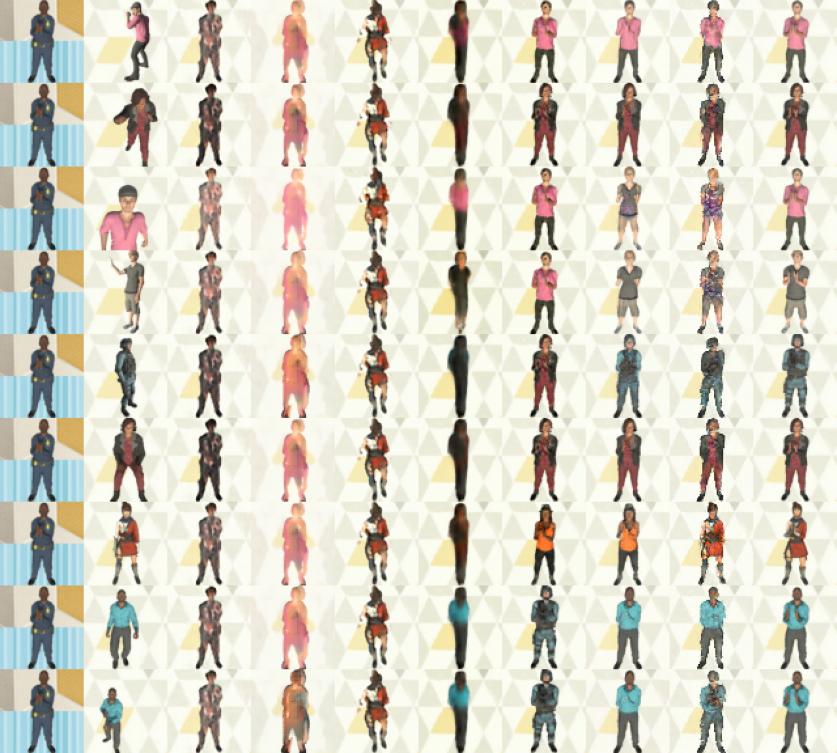}}
\put(10,355){input}
\put(39,355){guidance}
\put(82,355){AugCG}
\put(120,355){DRIT++}
\put(163,355){DIDD}
\put(196,355){MUNIT}
\put(233,355){\small MUNITX}
\put(272,355){\footnotesize StarGANv2}
\put(316,355){\bf RIFT}
\put(358,355){\bf GT}
\end{picture}

\end{center}

\caption{\synactcaption}\label{fig:sup_example_26}
\end{figure*}

\begin{figure*}
\begin{center}
\begin{picture}(385,385)
\put(0,0){\includegraphics[width=0.7777\linewidth]{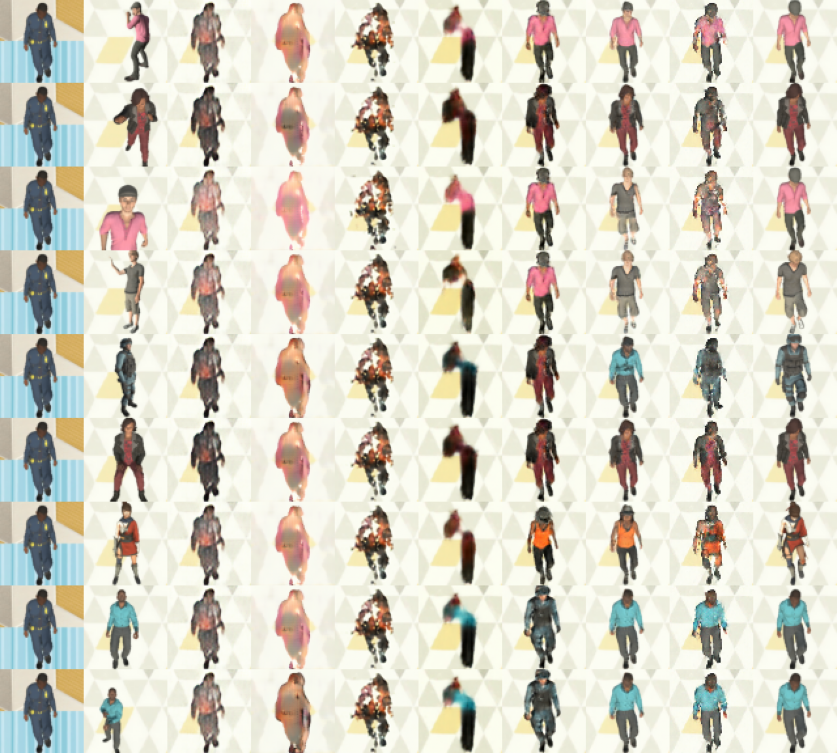}}
\put(10,355){input}
\put(39,355){guidance}
\put(82,355){AugCG}
\put(120,355){DRIT++}
\put(163,355){DIDD}
\put(196,355){MUNIT}
\put(233,355){\small MUNITX}
\put(272,355){\footnotesize StarGANv2}
\put(316,355){\bf RIFT}
\put(358,355){\bf GT}
\end{picture}

\end{center}

\caption{\synactcaption}\label{fig:sup_example_27}
\end{figure*}

\begin{figure*}
\begin{center}
\begin{picture}(385,385)
\put(0,0){\includegraphics[width=0.7777\linewidth]{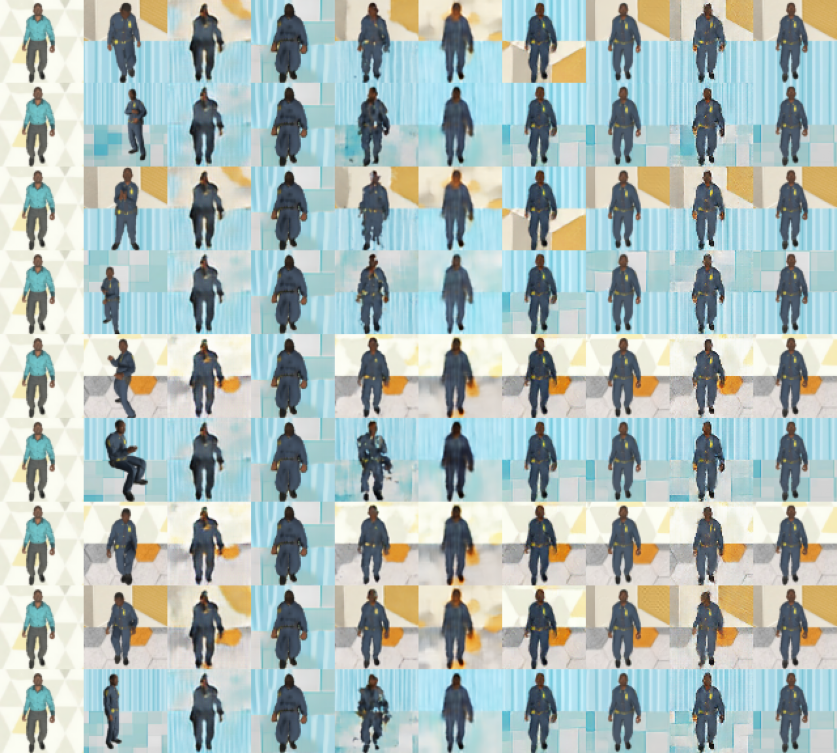}}
\put(10,355){input}
\put(39,355){guidance}
\put(82,355){AugCG}
\put(120,355){DRIT++}
\put(163,355){DIDD}
\put(196,355){MUNIT}
\put(233,355){\small MUNITX}
\put(272,355){\footnotesize StarGANv2}
\put(316,355){\bf RIFT}
\put(358,355){\bf GT}
\end{picture}

\end{center}

\caption{\synactcaption}\label{fig:sup_example_28}
\end{figure*}

\begin{figure*}
\begin{center}
\begin{picture}(385,385)
\put(0,0){\includegraphics[width=0.7777\linewidth]{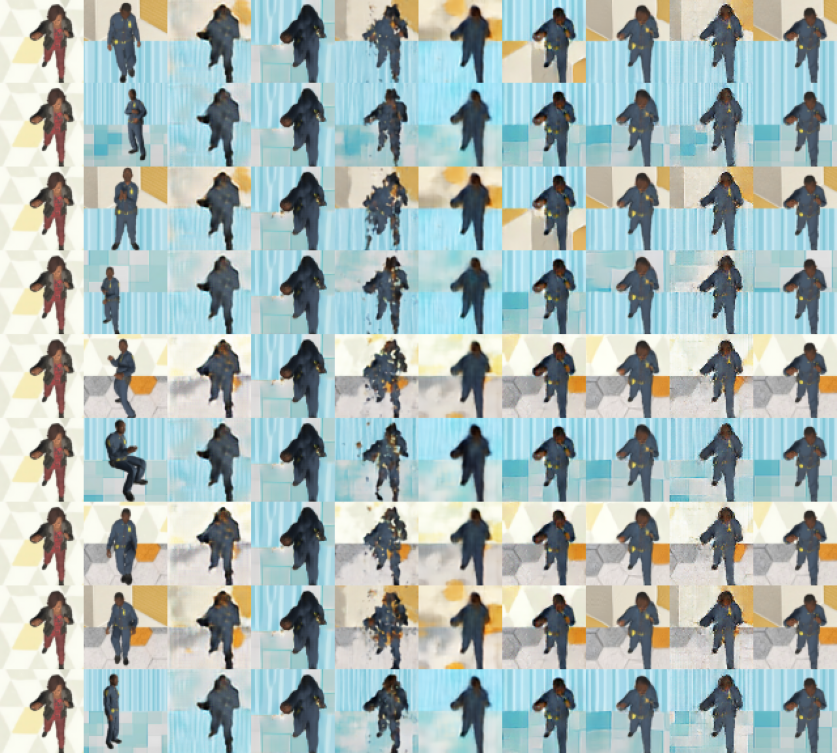}}
\put(10,355){input}
\put(39,355){guidance}
\put(82,355){AugCG}
\put(120,355){DRIT++}
\put(163,355){DIDD}
\put(196,355){MUNIT}
\put(233,355){\small MUNITX}
\put(272,355){\footnotesize StarGANv2}
\put(316,355){\bf RIFT}
\put(358,355){\bf GT}
\end{picture}

\end{center}

\caption{\synactcaption}\label{fig:sup_example_29}
\end{figure*}

\begin{figure*}
\begin{center}
\begin{picture}(385,385)
\put(0,0){\includegraphics[width=0.7777\linewidth]{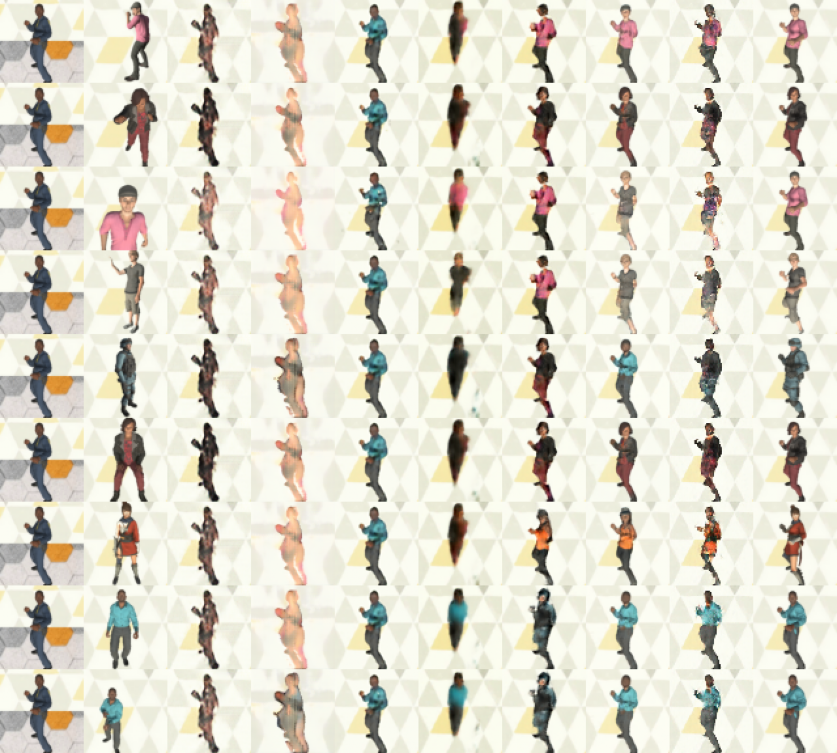}}
\put(10,355){input}
\put(39,355){guidance}
\put(82,355){AugCG}
\put(120,355){DRIT++}
\put(163,355){DIDD}
\put(196,355){MUNIT}
\put(233,355){\small MUNITX}
\put(272,355){\footnotesize StarGANv2}
\put(316,355){\bf RIFT}
\put(358,355){\bf GT}
\end{picture}

\end{center}

\caption{\synactcaption}\label{fig:sup_example_30}
\end{figure*}

\begin{figure*}
\begin{center}
\begin{picture}(385,385)
\put(0,0){\includegraphics[width=0.7777\linewidth]{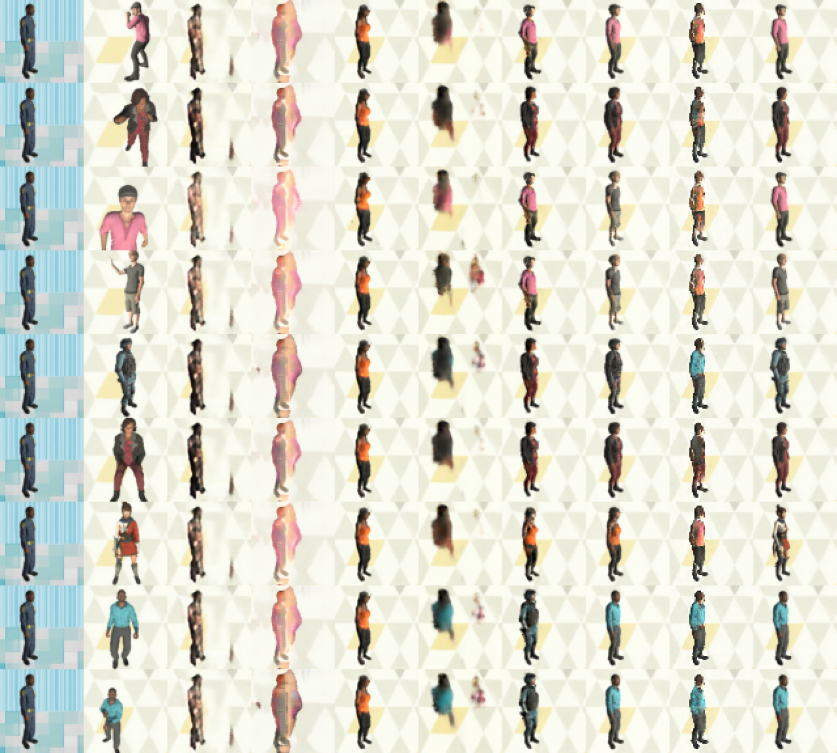}}
\put(10,355){input}
\put(39,355){guidance}
\put(82,355){AugCG}
\put(120,355){DRIT++}
\put(163,355){DIDD}
\put(196,355){MUNIT}
\put(233,355){\small MUNITX}
\put(272,355){\footnotesize StarGANv2}
\put(316,355){\bf RIFT}
\put(358,355){\bf GT}
\end{picture}

\end{center}

\caption{\synactcaption}\label{fig:sup_example_31}
\end{figure*}

\begin{figure*}
\begin{center}
\begin{picture}(385,385)
\put(0,0){\includegraphics[width=0.7777\linewidth]{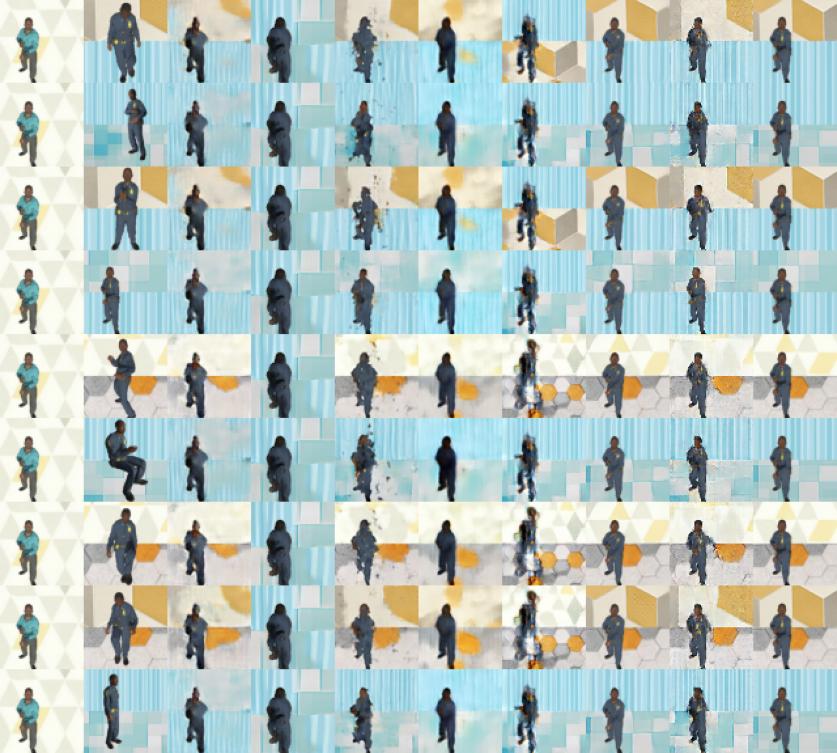}}
\put(10,355){input}
\put(39,355){guidance}
\put(82,355){AugCG}
\put(120,355){DRIT++}
\put(163,355){DIDD}
\put(196,355){MUNIT}
\put(233,355){\small MUNITX}
\put(272,355){\footnotesize StarGANv2}
\put(316,355){\bf RIFT}
\put(358,355){\bf GT}
\end{picture}

\end{center}

\caption{\synactcaption}\label{fig:sup_example_32}
\end{figure*}

\newcommand*{\celebacaption}{Qualitative comparison to existing methods on Celeb-A across seven methods: Augmented CycleGAN \cite{almahairi2018augmented}, DRIT++ \cite{DRIT_plus}, DIDD \cite{benaim2019didd}, MUNIT \cite{huang2018multimodal}, MUNITX \cite{bashkirova2021evaluation}, StarGANv2 \cite{choi2020stargan} and the proposed RIFT.}

\pagestyle{empty}

\begin{figure*}
\begin{center}
\begin{picture}(345,630)
\put(0,0){\includegraphics[width=0.7\linewidth]{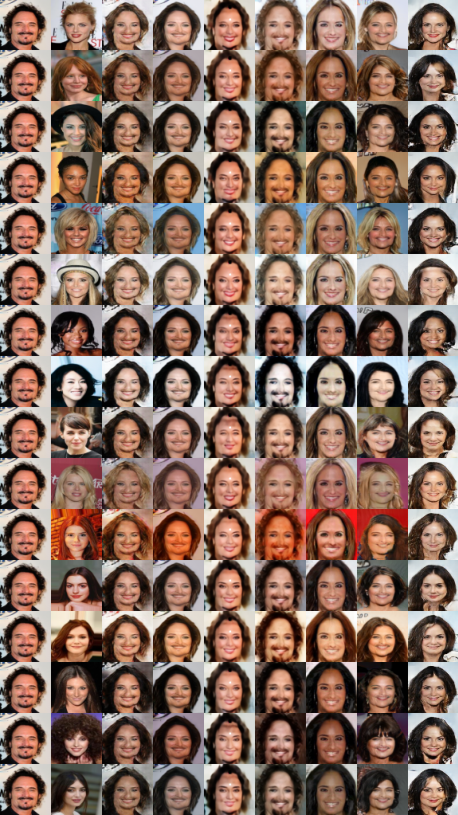}}
\put(10,623){input}
\put(39,623){guidance}
\put(82,623){AugCG}
\put(120,623){DRIT++}
\put(163,623){DIDD}
\put(196,623){MUNIT}
\put(233,623){\small MUNITX}
\put(272,623){\footnotesize StarGANv2}
\put(316,623){\bf RIFT}
\end{picture}
\end{center}
\caption{\celebacaption}\label{fig:sup_example_1}
\end{figure*}

\begin{figure*}
\begin{center}
\begin{picture}(345,630)
\put(0,0){\includegraphics[width=0.7\linewidth]{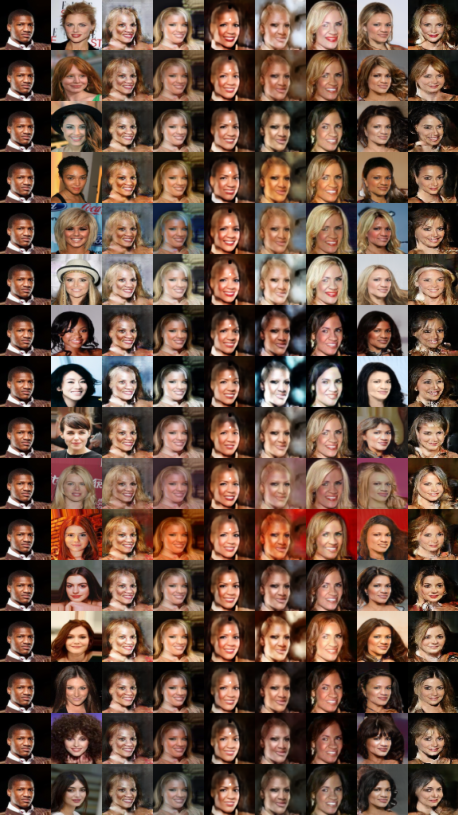}}
\put(10,623){input}
\put(39,623){guidance}
\put(82,623){AugCG}
\put(120,623){DRIT++}
\put(163,623){DIDD}
\put(196,623){MUNIT}
\put(233,623){\small MUNITX}
\put(272,623){\footnotesize StarGANv2}
\put(316,623){\bf RIFT}
\end{picture}
\end{center}
\caption{\celebacaption}\label{fig:sup_example_3}
\end{figure*}

\begin{figure*}
\begin{center}
\begin{picture}(345,630)
\put(0,0){\includegraphics[width=0.7\linewidth]{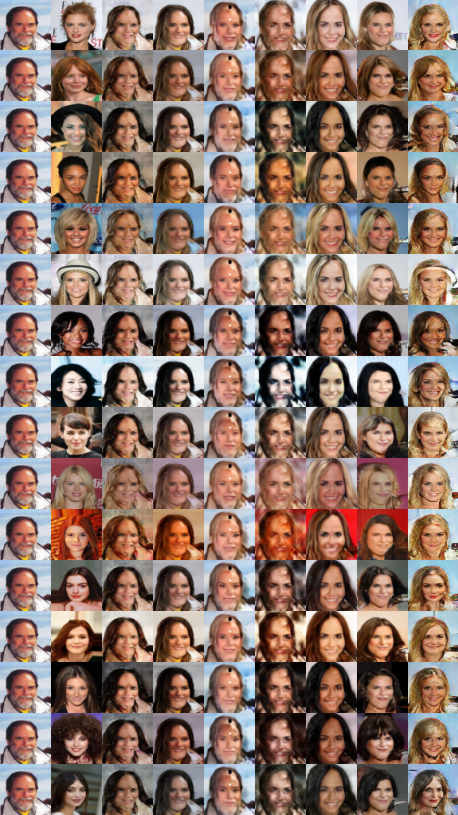}}
\put(10,623){input}
\put(39,623){guidance}
\put(82,623){AugCG}
\put(120,623){DRIT++}
\put(163,623){DIDD}
\put(196,623){MUNIT}
\put(233,623){\small MUNITX}
\put(272,623){\footnotesize StarGANv2}
\put(316,623){\bf RIFT}
\end{picture}
\end{center}
\caption{\celebacaption}\label{fig:sup_example_4}
\end{figure*}

\begin{figure*}
\begin{center}
\begin{picture}(345,630)
\put(0,0){\includegraphics[width=0.7\linewidth]{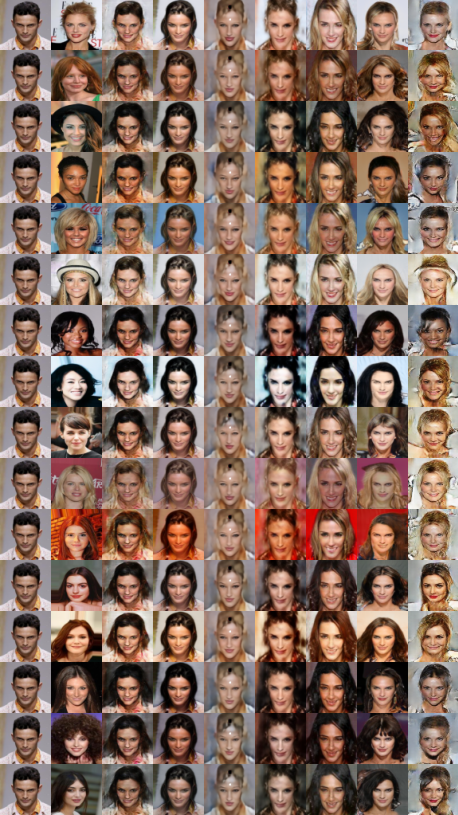}}
\put(10,623){input}
\put(39,623){guidance}
\put(82,623){AugCG}
\put(120,623){DRIT++}
\put(163,623){DIDD}
\put(196,623){MUNIT}
\put(233,623){\small MUNITX}
\put(272,623){\footnotesize StarGANv2}
\put(316,623){\bf RIFT}
\end{picture}
\end{center}
\caption{\celebacaption}\label{fig:sup_example_5}
\end{figure*}

\begin{figure*}
\begin{center}
\begin{picture}(345,630)
\put(0,0){\includegraphics[width=0.7\linewidth]{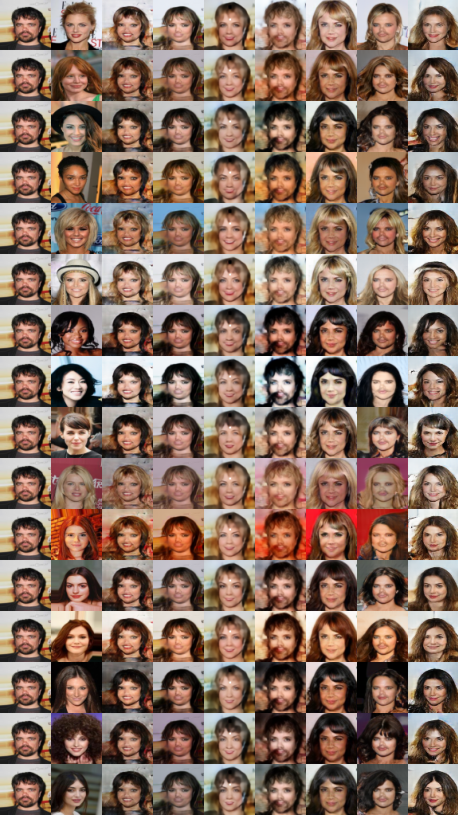}}
\put(10,623){input}
\put(39,623){guidance}
\put(82,623){AugCG}
\put(120,623){DRIT++}
\put(163,623){DIDD}
\put(196,623){MUNIT}
\put(233,623){\small MUNITX}
\put(272,623){\footnotesize StarGANv2}
\put(316,623){\bf RIFT}
\end{picture}
\end{center}
\caption{\celebacaption}\label{fig:sup_example_6}
\end{figure*}

\begin{figure*}
\begin{center}
\begin{picture}(345,630)
\put(0,0){\includegraphics[width=0.7\linewidth]{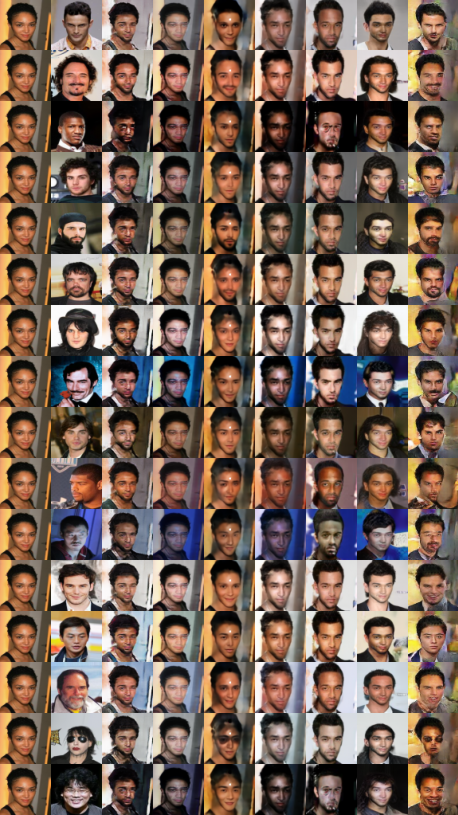}}
\put(10,623){input}
\put(39,623){guidance}
\put(82,623){AugCG}
\put(120,623){DRIT++}
\put(163,623){DIDD}
\put(196,623){MUNIT}
\put(233,623){\small MUNITX}
\put(272,623){\footnotesize StarGANv2}
\put(316,623){\bf RIFT}
\end{picture}
\end{center}
\caption{\celebacaption}\label{fig:sup_example_7}
\end{figure*}

\begin{figure*}
\begin{center}
\begin{picture}(345,630)
\put(0,0){\includegraphics[width=0.7\linewidth]{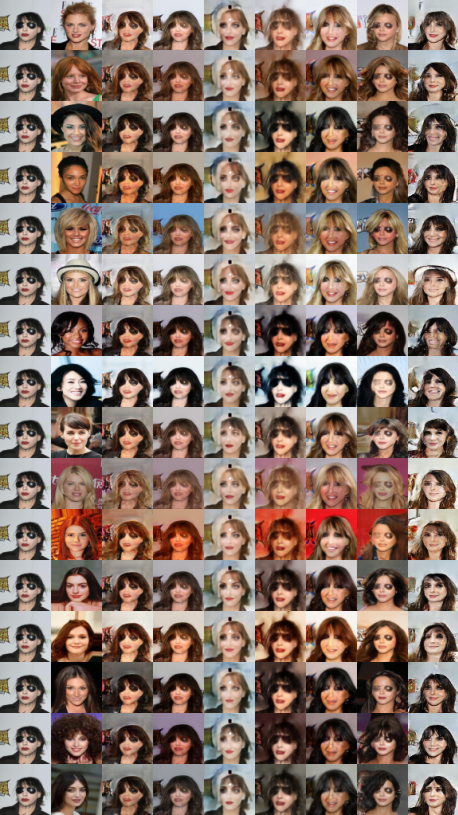}}
\put(10,623){input}
\put(39,623){guidance}
\put(82,623){AugCG}
\put(120,623){DRIT++}
\put(163,623){DIDD}
\put(196,623){MUNIT}
\put(233,623){\small MUNITX}
\put(272,623){\footnotesize StarGANv2}
\put(316,623){\bf RIFT}
\end{picture}
\end{center}
\caption{\celebacaption}\label{fig:sup_example_8}
\end{figure*}

\begin{figure*}
\begin{center}
\begin{picture}(345,630)
\put(0,0){\includegraphics[width=0.7\linewidth]{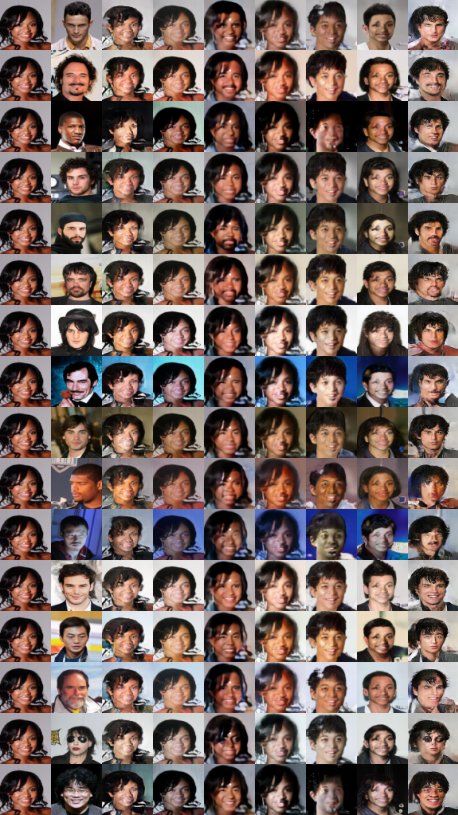}}
\put(10,623){input}
\put(39,623){guidance}
\put(82,623){AugCG}
\put(120,623){DRIT++}
\put(163,623){DIDD}
\put(196,623){MUNIT}
\put(233,623){\small MUNITX}
\put(272,623){\footnotesize StarGANv2}
\put(316,623){\bf RIFT}
\end{picture}
\end{center}
\caption{\celebacaption}\label{fig:sup_example_9}
\end{figure*}

\begin{figure*}
\begin{center}
\begin{picture}(345,630)
\put(0,0){\includegraphics[width=0.7\linewidth]{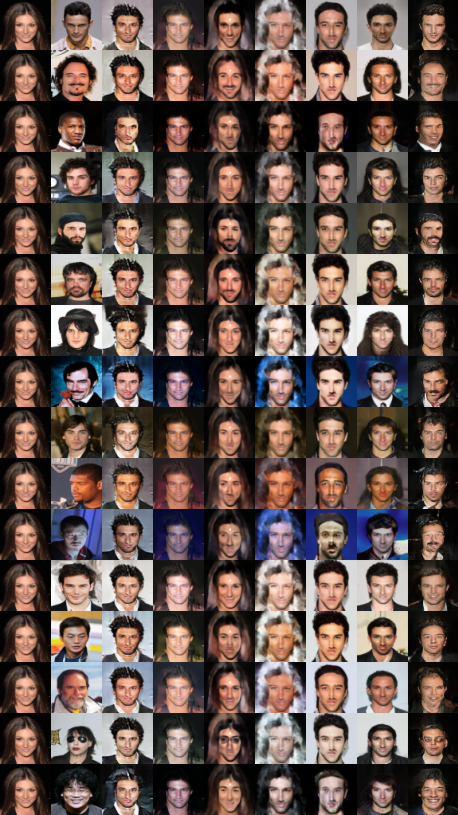}}
\put(10,623){input}
\put(39,623){guidance}
\put(82,623){AugCG}
\put(120,623){DRIT++}
\put(163,623){DIDD}
\put(196,623){MUNIT}
\put(233,623){\small MUNITX}
\put(272,623){\footnotesize StarGANv2}
\put(316,623){\bf RIFT}
\end{picture}
\end{center}
\caption{\celebacaption}\label{fig:sup_example_12}
\end{figure*}

\begin{figure*}
\begin{center}
\begin{picture}(345,630)
\put(0,0){\includegraphics[width=0.7\linewidth]{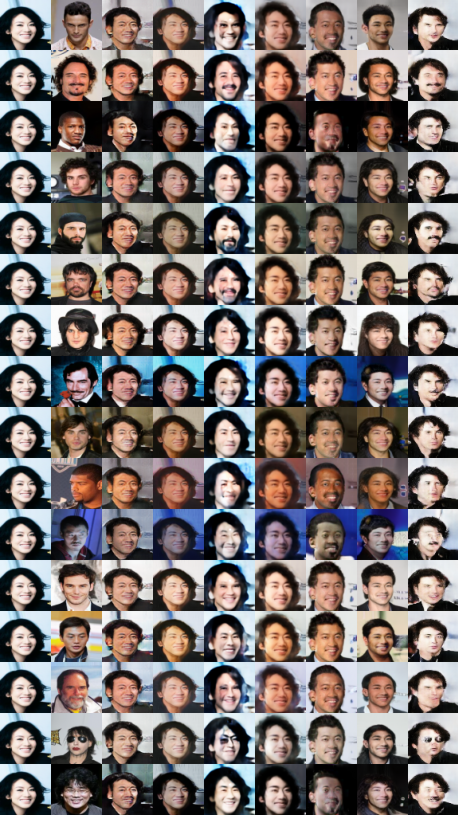}}
\put(10,623){input}
\put(39,623){guidance}
\put(82,623){AugCG}
\put(120,623){DRIT++}
\put(163,623){DIDD}
\put(196,623){MUNIT}
\put(233,623){\small MUNITX}
\put(272,623){\footnotesize StarGANv2}
\put(316,623){\bf RIFT}
\end{picture}
\end{center}
\caption{\celebacaption}\label{fig:sup_example_13}
\end{figure*}

\begin{figure*}
\begin{center}
\begin{picture}(345,630)
\put(0,0){\includegraphics[width=0.7\linewidth]{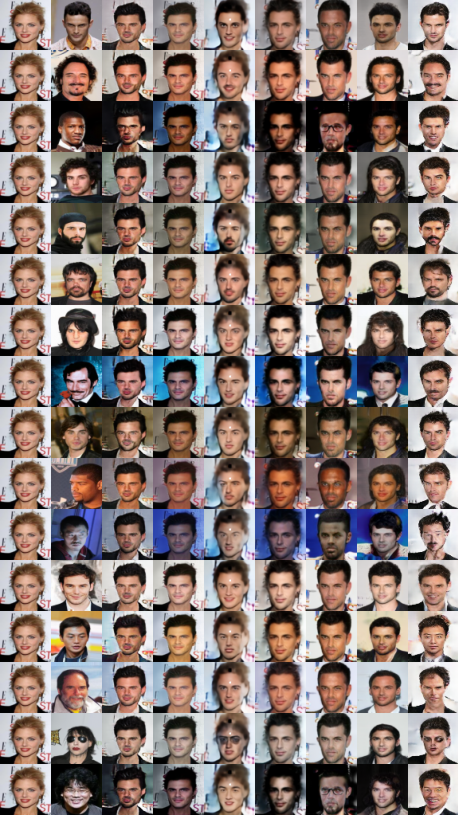}}
\put(10,623){input}
\put(39,623){guidance}
\put(82,623){AugCG}
\put(120,623){DRIT++}
\put(163,623){DIDD}
\put(196,623){MUNIT}
\put(233,623){\small MUNITX}
\put(272,623){\footnotesize StarGANv2}
\put(316,623){\bf RIFT}
\end{picture}
\end{center}
\caption{\celebacaption}\label{fig:sup_example_15}
\end{figure*}

\end{document}